\newtheorem{theorem}{Theorem}
\newtheorem{proposition}{Proposition}
\definecolor{lime}{HTML}{A6CE39}
\DeclareRobustCommand{\orcidicon}{%
	\begin{tikzpicture}
		\draw[lime, fill=lime] (0,0) 
		circle [radius=0.16] 
		node[white] {{\fontfamily{qag}\selectfont \tiny ID}};    \draw[white, fill=white] (-0.0625,0.095) 
		circle [radius=0.007];    \end{tikzpicture}
	\hspace{-2mm}}
\xdef\csname orcid\x\endcsname{\noexpand\href{https://orcid.org/\csname orcidauthor\x\endcsname}{\noexpand\orcidicon}}
\begin{document}

\title{Federated Prediction-Powered Inference from Decentralized Data}

\author{Ping Luo, Xiaoge Deng, Ziqing Wen, Tao Sun, Dongsheng Li
\thanks{Manuscript received. \textit{(Corresponding author: Tao Sun, Dongsheng Li )}}
\thanks{Ping Luo, Xiaoge Deng, Ziqing Wen, Tao Sun, Dongsheng Li are with the National Laboratory for Parallel and Distributed Processing, National University of Defense Technology, ChangSha, 410073, China (e-mail: \href{luoping@nudt.edu.cn}{luoping@nudt.edu.cn}; \href{dengxg@nudt.edu.cn}{dengxg@nudt.edu.cn}; \href{zqwen@nudt.edu.cn}{zqwen@nudt.edu.cn}; \href{suntao.saltfish@outlook.com}{suntao.saltfish@outlook.com}; \href{dsli@nudt.edu.cn}{dsli@nudt.edu.cn}).}
}

\markboth{Journal of \LaTeX\ Class Files,~Vol.~14, No.~8, August~2021}%
{Shell \MakeLowercase{\textit{et al.}}: A Sample Article Using IEEEtran.cls for IEEE Journals}

\IEEEpubid{0000--0000/00\$00.00~\copyright~2021 IEEE}

\maketitle

\begin{abstract}
In various domains, the increasing application of machine learning allows researchers to access inexpensive predictive data, which can be utilized as auxiliary data for statistical inference. Although such data are often unreliable compared to gold-standard datasets, Prediction-Powered Inference (PPI) has been proposed to ensure statistical validity despite the unreliability. However, the challenge of `data silos' arises when the private gold-standard datasets are non-shareable for model training, leading to less accurate predictive models and invalid inferences. In this paper, we introduces the Federated Prediction-Powered Inference (Fed-PPI) framework, which addresses this challenge by enabling decentralized experimental data to contribute to statistically valid conclusions without sharing private information. The Fed-PPI framework involves training local models on private data, aggregating them through Federated Learning (FL), and deriving confidence intervals using PPI computation. The proposed framework is evaluated through experiments, demonstrating its effectiveness in producing valid confidence intervals.
\end{abstract}

\begin{IEEEkeywords}
 Federated learning, machine learning, statistical inference, decentralized data.
\end{IEEEkeywords}

\section{Introduction}
\IEEEPARstart{A}{s} machine learning is increasingly applied across various domains, researchers can obtain a wealth of inexpensive data from model predictions, such as predictions of protein structures, gene sequences, climate patterns, etc \cite{mirdita2022colabfold,reichstein2019deep,rives2021biological,jaganathan2019predicting}. The utility of model predictions as auxiliary data has been well-established in statistical inference\cite{wu2001model}, and significant efforts have been devoted to developing asymptotically valid confidence intervals when the predictive model is trained on the experimental (gold-standard) datasets to get predictive datasets \cite{breidt2017model}. Although these predictive datasets are often unreliable compared to the gold-standard datasets, Prediction-Powered Inference (PPI) has been proposed to extract information from unreliable predictions while ensuring the statistical validity of the conclusions, which is the smaller confidence intervals $\mathcal{C}^{PP}$ and the powerful $P$ values.\cite{angelopoulos2023prediction}.

The PPI method requires a large gold-standard dataset to train models, enabling more accurate predictions. However, in real-world scenarios, these gold-standard datasets are often considered valuable assets by research institutions and are thus not shared, leading to the problem of `data silos' in the relevant research fields \cite{leonelli2019data,miyakawa2020no,li2022federated}. Under these circumstances, researchers are forced to use private experimental datasets for training, which often have small sample sizes and incomplete features, resulting in less accurate predictive models and invalid statistical inferences \cite{kim2016collaborative,naeem2020reliable,caiafa2020decomposition,koppe2021deep}. Therefore, researchers must devise a strategy that avoids the direct centralization of private data from various research institutions, while still enabling the participation of all data in training and inference \cite{nguyen2022novel}.

\begin{figure}[!t]
	\centering
	\includegraphics[width=0.5\textwidth]{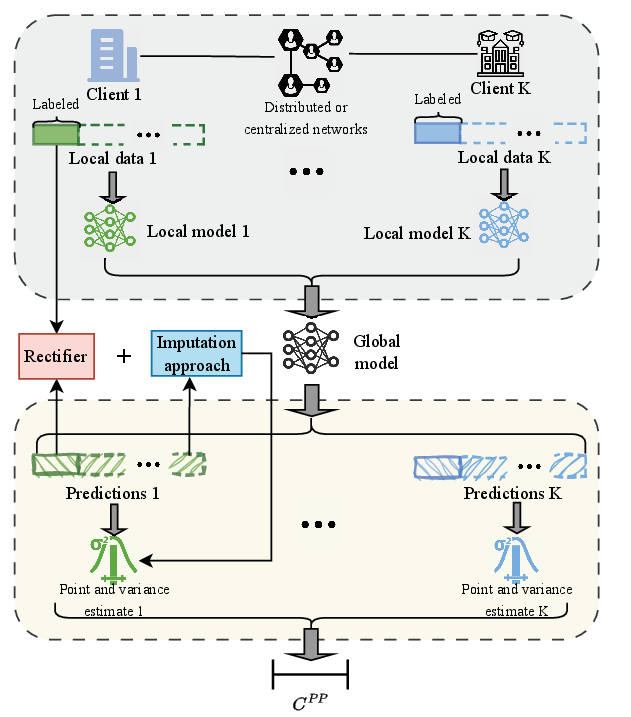}
	\caption{Systems for prediction-powered inference in FL; The upper half of the figure represents the traditional FL training process, while the lower half depicts the Prediction-Powered Inference process and parameters aggregation on the client side.}
	\label{system}
\end{figure}

\IEEEpubidadjcol

In this paper, we first propose the Federated Prediction-Powered Inference (Fed-PPI) framework, which aims to derive statistically valid conclusions from decentralized experimental data without the need to share individual (or institutional) privacy information \cite{mcmahan2017communication,angelopoulos2023prediction}. In Figure \ref{system}, we present the system architecture of Fed-PPI. Initially, each client participating in Federated Learning (FL) training utilizes its local experimental data (typically with unlabeled data significantly outnumbering labeled data) to train local models \cite{mcmahan2017communication}. These local models can be trained using labeled data through supervised learning or all available data through semi-supervised learning \cite{fan2022private,diao2022semifl,pei2022knowledge}. The trained models are then either sent to a central server for aggregation (centralized FL) \cite{mcmahan2017communication} or directly aggregated with neighboring nodes (decentralized FL) \cite{sun2022decentralized} to obtain a global model. Next, each client uses the global model to predict on its local dataset (both labeled and unlabeled) and computes the measure of fit and the rectifier as defined in PPI \cite{angelopoulos2023prediction}. The measure of fit is a statistical value (e.g., mean) or a function used to derive a statistical value (e.g., gradient) based on predictions from the unlabeled data, incorporating the prediction error of the global model. The rectifier, which measures this error, is computed using the labeled data and their corresponding predictions. These measure of fit and rectifier values are then combined on each client to obtain the relevant parameters for the local confidence intervals. Finally, these parameters are sent to the FL aggregation process to derive the confidence interval $\mathcal{C}^{PP}$ for the entire dataset.

The main contributions of this paper are as follows:
\begin{itemize}
	\item	We introduce Federated Learning and develop a new Fed-PPI framework to address the `data silos' problem of PPI in real-world scenarios,. In Fed-PPI, each participating entity (client) trains its model locally, and through the FL aggregation process combined with the PPI computation, obtains the global model and global confidence interval. 
	\item	We define the objectives and processes of the Fed-PPI framework and propose corresponding algorithms for common statistical problems such as means, quantiles, and coefficients in linear and logistic regression. Additionally, we provide a theoretical analysis of these algorithms.
	\item	We conducted experiments using the dataset from \cite{angelopoulos2023prediction} to evaluate the algorithms proposed under the Fed-PPI framework. The results demonstrate that the obtained confidence intervals are statistically valid..
\end{itemize}

This paper is organized as follows. Related work is introduced in Section \ref{sec-related work}. The basics of Fed-PPI is summarized in Section \ref{sec-preliminaries}. The algorithm for common statistical problems are presented in Section \ref{sec-algorithm}. Experimentation results are shown in Section \ref{sec-experiment}. The conclusion is presented in Section \ref{sec-conclusion}.

\section{Related Work}
\label{sec-related work}
FL has emerged as a promising approach for collaborative model training across multiple institutions without the need to share sensitive data, making it particularly suitable for applications in healthcare, biology, chemistry, and materials science. For instance, Dayan et al. (2021) demonstrated the effectiveness of FL in predicting clinical outcomes for COVID-19 patients by integrating data from various healthcare institutions while preserving patient privacy \cite{dayan2021federated}. Similarly, Sheller et al. (2020) explored the feasibility of multi-institutional deep learning for brain tumor segmentation, highlighting the potential of FL to build robust models without centralized data sharing \cite{sheller2020federated}. Xu et al. (2021) provided a comprehensive review of FL in healthcare informatics, discussing its application in developing predictive models using federated electronic health records. \cite{xu2021federated,dang2022federated}. In addition to healthcare, Banabilah et al. (2022) discussed broader FL applications, including its potential impact on fields such as biology, chemistry, and materials science, where data privacy concerns are paramount \cite{banabilah2022federated}. These studies collectively underscore the growing significance of FL in facilitating secure, collaborative research across diverse scientific domains.

As we mentioned in the previous section, applying the FL framework to PPI is a novel attempt. In this endeavor, it is essential to understand the research trajectory of PPI, where it has evolved from a body of work focused on estimation with many unlabeled data points and few labeled data \cite{mey2022improved,azriel2022semi,song2024general,zhang2022high,Wang2020MethodsFC}. Although the original work mentions that PPI can correct biases introduced by model predictions \cite{angelopoulos2023prediction}, the accuracy of these predictions still depends on the degree of model training. However, in FL, models trained in a decentralized manner often struggle to meet the prediction accuracy benchmarks set by centrally trained models, as this depends on the degree of data dispersion and the constraints of communication overhead \cite{cheng2022aafl}. 

FL has seen significant advancements in achieving prediction accuracy comparable to centralized machine learning through the development of various optimization algorithms. The foundational work by McMahan et al. (2017) introduced the Federated Averaging (FedAvg) algorithm, which effectively balances local updates and global aggregation, enabling FL models to perform similarly to centrally trained models \cite{mcmahan2017communication}. Building on this, researchers explored optimization methods like FedProx, FedAMP, FedNova and SCAFFOLD, demonstrating improved accuracy in non-IID data settings, further closing the gap between FL and centralized learning \cite{li2020federated,huang2021personalized,wang2020tackling,karimireddy2020scaffold}. These studies collectively highlight the rapid progress in FL, particularly in optimizing model accuracy across varying data distributions. In Section \ref{sec-experiment}, we will demonstrate that by applying PPI to FL and equipping it with state-of-the-art FL optimization algorithms, our Fed-PPI framework can achieve confidence intervals nearly identical to those obtained through centralized training, all while preserving the privacy of decentralized data.

\section{Preliminaries and Definitions}
\label{sec-preliminaries}
In this section, we introduce fundamental concepts of the PPI method and relevant definitions within the FL-PPI framework. 

\subsection{Convex Estimation}
\label{convex-est}
For the sake of mathematical analysis, we assume that there are $ K $ clients, and the samples on all clients are labeled. That is, there exists a dataset $\left(\widebar{X}_{k}^{i}, \widebar{Y}_{k}^{i},f({\widebar{X}}_k^i)\right) \in (\mathcal{X} \times \mathcal{Y})^{m_k}$, where $i \in [1, m_k]$ and $k \in [1, K]$. The model prediction function $f$ is obtained from the data $(\widebar{X}_k^i, \widebar{Y}_k^i)$ across all clients, trained within the FL framework and maps from the input space $\mathcal{X}$ to the output space $\mathcal{Y}$, i.e., $ f: \mathcal{X} \rightarrow \mathcal{Y} $. Our main objective is a technique for inference on estimands that can be expressed as the solution to a convex optimization problem. Formally, we consider estimands of the form
\begin{equation}
\label{estimand}
    \theta^* = \arg \min_{\theta \in \mathbb{R}^d} \mathbb{E} \left[ \ell_\theta \left( \widebar{X}_k^i, \widebar{Y}_k^i \right) \right]
\end{equation}
where $\theta$ represents the mean or many other quantities of a random outcome over a population of interest, and the loss function $\ell_\theta:\mathcal{X}\times\mathcal{Y}\rightarrow\mathbb{R}$ is convex in $\theta \in \mathbb{R}^d$ for some $d \in \mathbb{N}$. Throughout, we take the existence of $\theta^*$ as given, and as $m_k$ approaches $\infty$, $\theta^*$ gets closer to the true value. If the minimizer is not unique, our method will return a confidence set guaranteed to contain all minimizers. Under mild conditions, convexity ensures that $\theta^*$ can also be expressed as the value solving
\begin{equation}
\label{convex-solution}
    \mathbb{E}\left[g_{\theta^\ast}({\widebar{X}}_k^i,{\widebar{Y}}_k^i)\right]=0
\end{equation}
where $g_{\theta}:\mathcal{X}\times\mathcal{Y}\to\mathbb{R}^{p}$ is a subgradient of 
$\ell_\theta$ with respect to $\theta$. 

Due to the principles of FL, we cannot directly access the total dataset $\bigcup (\widebar{X}_k^i, \widebar{Y}_k^i)$ from individual clients to obtain statistical information. However, the statistical features of the distribution datasets on each client can be combined using statistical methods to obtain confidence intervals within the FL framework. Based on this statistical method we need to define the following rules.
\subsubsection{Aggregation weights}
In the FedAvg algorithm of FL, each client performs a weighted average of its model parameters (or gradient parameters) at each round of aggregation. This weight is related to the sample number of the local dataset on the client and is defined as
    \begin{equation}
    \label{original-weight}
    p_k := \frac{m_k}{\sum_{k=1}^{K} m_k}.
    \end{equation}
We extend this weighting to the aggregation step of combining the statistical parameters of individual clients. The validity of this weighting will be demonstrated in the experiments presented in Section \ref{sec-experiment}.
\subsubsection{Imputed gradient}
\label{impute prediction}
Our objective is to obtain a confidence interval for the estimands $\theta^*$. This requires a substantial amount of labeled data $({\widebar{X}}_k^i, {\widebar{Y}}_k^i)$. However, in practice, we can only obtain a large amount of predicted data $({\widebar{X}}_k^i, f({\widebar{X}}_k^i))$. Therefore, we commence with $({\widebar{X}}_k^i, f({\widebar{X}}_k^i))$ and define
\begin{equation}
\label{original-prediction}
    \begin{aligned}
        g(\theta) &=: \sum_{k=1}^{K} p_k \frac{1}{m_k} \sum_{i=1}^{m_k} g_\theta \left( {\widebar{X}}_k^i, f({\widebar{X}}_k^i) \right) \\
        &= \sum_{k=1}^{K} p_k \mathbb{E}_i \left[g_\theta \left( {\widebar{X}}_k^i, f({\widebar{X}}_k^i) \right) \right] \\
        &= \mathbb{E}_{k} \left[\mathbb{E}_{i} \left[g_\theta \left( {\widebar{X}}_k^i, f({\widebar{X}}_k^i) \right) \right]\right]
    \end{aligned}
\end{equation}
For Eq. \eqref{original-prediction}, the $\mathbb{E}_i$ term represents the FL local computation on the clients, while the $\mathbb{E}_k$ term represents the FL global aggregation operation. In traditional approaches, the datasets from individual clients are aggregated for centralized computation as
\begin{equation}
\label{additional-agg}
    g(\theta) =: \mathbb{E} \left[ \bigcup g_\theta (\widebar{X}_k^i, \widebar{Y}_k^i) \right]
\end{equation}
For the sake of brevity, we refer to the two aggregations, Eq. \eqref{original-prediction} and Eq. \eqref{additional-agg}, as $\mathbb{E}_{k,i}$ and $\mathbb{E}_{\bigcup}$, respectively. For imputed predictions, we have $\mathbb{E}_{k,i} = \mathbb{E}_{\bigcup}$. It can be demonstrated that the aggregated parameters accurately represent the centralized data $\bigcup (\widebar{X}_k^i, \widebar{Y}_k^i)$. The proof of this statement is provided in Appendix \ref{appendix1}.

In particular, for every $\theta$, we want a confidence set $\mathcal{T}_\delta(\alpha-\delta)$, satisfying
\begin{equation}
    P\left(g(\theta)\in\mathcal{T}_{\alpha-\delta}(\theta)\right)\geq1-(\alpha-\delta)
\end{equation}
\subsubsection{Empirical rectifier}
The rectifier captures a notion of prediction error. In the general setting of convex estimation problems, the relevant notion of error is the bias of the subgradient $g_\theta$ computed using the predictions:
\begin{equation}
\label{original-rectifier}
    \begin{aligned}
        \Delta(\theta) &=: \sum_{k=1}^{K} p_k \mathbb{E}_i \left[ g_\theta({\widebar{X}}_k^i, {\widebar{Y}}_k^i) - g_\theta({\widebar{X}}_k^i, f({\widebar{X}}_k^i)) \right] \\
        &= \mathbb{E}_{k,i} \left[ g_\theta({\widebar{X}}_k^i, {\widebar{Y}}_k^i) - g_\theta({\widebar{X}}_k^i, f({\widebar{X}}_k^i)) \right]
    \end{aligned}
\end{equation}
For the analysis of $\mathbb{E}_{k,i}$ and $\mathbb{E}_{\bigcup}$, Eq. \eqref{original-rectifier} leads to the same conclusion as Eq. \eqref{original-prediction}.

The next step is to create a confidence set for the rectifier, $\mathcal{R}_\delta(\theta)$, satisfying
\begin{equation}
    P(\Delta(\theta)\in\mathcal{R}_\delta(\theta))\geq1-\delta
\end{equation}
Because $\Delta$ and $g(\theta)$ is an expectation for each $\theta$, $\mathcal{T}_{\alpha-\delta}$ and $\mathcal{R}_\delta(\theta)$ can be constructed using standard, off-the-shelf
confidence intervals for the mean, which we review in Appendix E.

We reformulate the objective of Eq. \eqref{estimand} and Eq. \eqref{convex-solution} to finding the value of $\theta^*$ that satisfies $g(\theta) + \Delta(\theta) = 0$ based on the above definitions. Consequently, we present the following theorem.
\begin{theorem}[Convex estimation]
\label{convex estimation}
    Suppose that the convex estimation problem is nondegenerate as in \eqref{convex-solution}. Fix $\alpha \in (0,1)$ and $\Delta(\theta) \in (0,\alpha)$. Suppose that, for any $\theta \in \mathbb{R}^d$, we can construct $\mathcal{T}_{\alpha-\delta}$ and $\mathcal{R}_\delta(\theta)$ satisfying
    \begin{equation}
        \left\{
            \begin{aligned}
                &P\left(g(\theta)\in\mathcal{T}_{\alpha-\delta}(\theta)\right)\geq1-(\alpha-\delta)\\
                &P(\Delta(\theta)\in\mathcal{R}_\delta(\theta))\geq1-\delta
            \end{aligned} 
        \right.
    \end{equation}
    Let $\mathcal{C}_\alpha^{PP}=\{\theta:0\in\mathcal{R}_\delta(\theta)+\mathcal{T}_{\alpha-\delta}(\theta)\}$, where $+$ denotes the Minkowski sum. Then,
    \begin{equation}
        P(\theta^\ast\in\mathcal{C}_\alpha^{PP})\geq1-\alpha
    \end{equation}
\end{theorem}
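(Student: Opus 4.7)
The plan is to exploit the defining property of $\theta^*$ together with the Minkowski-sum construction of $\mathcal{C}_\alpha^{PP}$, using a simple union bound to combine the two given coverage guarantees.

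First, I would establish the algebraic identity $g(\theta^*) + \Delta(\theta^*) = 0$. Adding the defining equations \eqref{original-prediction} and \eqref{original-rectifier} gives
\begin{equation*}
g(\theta) + \Delta(\theta) = \mathbb{E}_{k,i}\bigl[g_\theta(\widebar{X}_k^i,\widebar{Y}_k^i)\bigr],
\end{equation*}
and since the discussion following \eqref{additional-agg} identifies $\mathbb{E}_{k,i}$ with $\mathbb{E}_{\bigcup}$ for the fully labeled case, this coincides with the population expectation $\mathbb{E}[g_\theta(\widebar{X},\widebar{Y})]$. The nondegeneracy assumption \eqref{convex-solution} then forces $g(\theta^*) + \Delta(\theta^*) = 0$.

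Next, I would translate this identity into a containment statement. On the event $A := \{g(\theta^*)\in\mathcal{T}_{\alpha-\delta}(\theta^*)\} \cap \{\Delta(\theta^*)\in\mathcal{R}_\delta(\theta^*)\}$, the sum $g(\theta^*)+\Delta(\theta^*)$ belongs to the Minkowski sum $\mathcal{T}_{\alpha-\delta}(\theta^*)+\mathcal{R}_\delta(\theta^*)$ by definition. Since that sum equals $0$, we conclude $0\in\mathcal{T}_{\alpha-\delta}(\theta^*)+\mathcal{R}_\delta(\theta^*)$, which is precisely the criterion for $\theta^*\in\mathcal{C}_\alpha^{PP}$. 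Hence $A \subseteq \{\theta^*\in\mathcal{C}_\alpha^{PP}\}$.

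Finally, I would close the argument with a union bound. By hypothesis, the complement of $\{g(\theta^*)\in\mathcal{T}_{\alpha-\delta}(\theta^*)\}$ has probability at most $\alpha-\delta$, and the complement of $\{\Delta(\theta^*)\in\mathcal{R}_\delta(\theta^*)\}$ has probability at most $\delta$, so $P(A^c)\leq(\alpha-\delta)+\delta=\alpha$, yielding $P(\theta^*\in\mathcal{C}_\alpha^{PP})\geq P(A)\geq 1-\alpha$.

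The only real obstacle is the first step: verifying that the federated aggregation $\mathbb{E}_{k,i}$ truly reproduces the centralized expectation $\mathbb{E}_{\bigcup}$, so that the population moment condition \eqref{convex-solution} transfers to the federated quantities $g$ and $\Delta$. This relies on the identification $\mathbb{E}_{k,i}=\mathbb{E}_{\bigcup}$ justified in Appendix \ref{appendix1} under the weights $p_k$ defined in \eqref{original-weight}; once that is in hand, the remainder of the proof is essentially bookkeeping plus the union bound.
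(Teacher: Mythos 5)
Your proposal is correct and follows essentially the same route as the paper's own proof: the same event intersecting the two coverage guarantees, the same union bound, and the same use of the nondegeneracy condition to turn $g(\theta^*)+\Delta(\theta^*)=0$ into the Minkowski-sum membership $0\in\mathcal{R}_\delta(\theta^*)+\mathcal{T}_{\alpha-\delta}(\theta^*)$. Your extra care in invoking the Appendix~\ref{appendix1} identification $\mathbb{E}_{k,i}=\mathbb{E}_{\bigcup}$ is a welcome (if implicit in the paper) justification, not a departure.
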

This result means that we can construct a valid confidence set for $\theta^*$, without assumptions about the data distribution or the machine-learning model, for any nondegenerate convex estimation problem. 

\subsection{Actual Estimate}
\label{actual estimate}
In practical scenarios, the dataset on each client typically consists of both labeled and unlabeled samples. The labeled data for each client is denoted as $(X_k, Y_k) \in (\mathcal{X} \times \mathcal{Y})^{n_k}$, where $X_k = (X_k^1, \ldots, X_k^{n_k})$ and $Y_k = (Y_k^1, \ldots, Y_k^{n_k})$. Additionally, each client possesses unlabeled data denoted as $(\widetilde{X}_k, \widetilde{Y}_k) \in (\mathcal{X} \times \mathcal{Y})^{N_k}$, where $\widetilde{X}_k = (\widetilde{X}_k^1, \ldots, \widetilde{X}_k^{N_k})$ and $\widetilde{Y}_k = (\widetilde{Y}_k^1, \ldots, \widetilde{Y}_k^{N_k})$. It is assumed that $ N_k \gg n_k $ for all clients, and that the labels of the data on each client, including the predictions, conform to a normal distribution. Notably, $\widetilde{Y}_k$ represents the predicted output of $\widetilde{X}_k$ after being processed by the FL-trained model, and thus cannot be derived from direct observation. 

Compared to the dataset $(\widebar{X}_{k}^{i}, \widebar{Y}_{k}^{i}, f({\widebar{X}}_k^i))$, we will redefine the `Aggregation weights' and use the current dataset to estimate the `Imputed prediction' and `Empirical rectifier'.
\subsubsection{Aggregation Weights}
Since we have divided the dataset $(\widebar{X}_{k}^{i}, \widebar{Y}_{k}^{i}, f({\widebar{X}}_k^i))$ into two parts—labeled and unlabeled—with sample sizes $n_k$ and $N_k$, respectively, where $m_k = n_k + N_k$, the aggregation weights $p_k$ in \eqref{original-weight} can be redefined as follows:
\begin{equation}
    \label{practical-weight}
    p_k := \frac{n_k + N_k}{\sum_{k=1}^{K} (n_k + N_k)}.
\end{equation}
\subsubsection{Imputed gradient}
For Eq. \eqref{original-prediction}, we estimate it directly with the unlabeled dataset $(\widetilde{X}_k, \widetilde{Y}_k)$.
\begin{equation}
\label{practical-prediction}
    \widetilde{g}(\theta) =: \sum_{k=1}^{K} p_k \frac{1}{N_k} \sum_{i=1}^{N_k} g_\theta\big(\widetilde{X}_k^i, f(\widetilde{X}_k^i)\big)
\end{equation}
For Eq. \eqref{unconvex-prediction}, we use the first half of the unlabeled dataset $(\widetilde{X}_k, \widetilde{Y}_k)$ (assuming $N_k$ is even) to estimate the value of $\theta^\ast$, and define

\subsubsection{Empirical rectifier}
We use the labeled dataset $(X_k, Y_k)$ to estimate the rectifier. Consequently, Eq. \eqref{original-rectifier} and Eq. \eqref{unconvex-rectifier} can be replaced by:
\begin{equation}
\label{practical-rectifier}
    \widehat{\Delta}(\theta) =: \sum_{k=1}^{K} p_k \frac{1}{n_k} \sum_{i=1}^{n_k} \left( g_\theta(X_k^i, Y_k^i) - g_\theta(X_k^i, f(X_k^i)) \right)
\end{equation}
The following is an asymptotic counterpart of Theorem \ref{convex estimation} that uses the central limit theorem in the confidence set construction.
\begin{theorem}[Convex estimation: asymptotic version]
\label{asymptotic}
    Suppose that the convex estimation problem is nondegenerate as in \eqref{convex-solution}. Denoting by $g^j(x, y)$ the $j$-th coordinate of $g(x, y)$. Fix $\alpha \in (0,1)$ and $j \in [d]$. For all $\theta \in \mathbb{R}^d$, define
    \begin{equation}
        \left\{
            \begin{aligned}
                &\widetilde{g}^{j}(\theta) =: \sum_{k=1}^{K} p_k \frac{1}{N_k} \sum_{i=1}^{N_k} g_\theta\big(\widetilde{X}_k^{i,j}, f(\widetilde{X}_k^{i,j})\big)\\
                &\widehat{\Delta}^{j}(\theta) =: \sum_{k=1}^{K} p_k \frac{1}{n_k} \sum_{i=1}^{n_k} \left( g_\theta(X_k^{i,j}, Y_k^{i,j}) - g_\theta(X_k^{i,j}, f(X_k^{i,j})) \right)
            \end{aligned} 
        \right.
    \end{equation}
Further, define $\left({\widehat{\sigma}}_{g}^j\left(\theta\right)\right)^2$ be the variance of $g_\theta\big(\widetilde{X}_k^i, f(\widetilde{X}_k^i)\big)$ values, and $\left({\widehat{\sigma}}_\Delta^j(\theta)\right)^2$ be the variance of $ g_\theta(X_k^i, Y_k^i) - g_\theta(X_k^i, f(X_k^i))$ values. Let $w_\alpha^j(\theta) = z_{1-\alpha/(2p)} \sqrt{\frac{\left({\widehat{\sigma}}_g^j\left(\theta\right)\right)^2}{N} + \frac{\left({\widehat{\sigma}}_\Delta^j(\theta)\right)^2}{n}}$ and $ \mathcal{C}_\alpha^{PP} = \left\{ \theta : \left| {\widetilde{g}}^j(\theta) + {\widehat{\Delta}}^j(\theta) \right| \le w_\alpha^j(\theta), \forall j \in [d] \right\} $.

Then, we have
$$\liminf_{n,N\to\infty}P(\theta^*\in\mathcal{C}_\alpha^{\mathrm{PP}})\geq1-\alpha.$$
\end{theorem}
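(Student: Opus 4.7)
The plan is to show that at the true value $\theta^\ast$, the federated estimator $\widetilde{g}^j(\theta^\ast)+\widehat{\Delta}^j(\theta^\ast)$ is asymptotically centered Gaussian with variance matched by $w_\alpha^j(\theta^\ast)^2/z_{1-\alpha/(2d)}^2$, after which a Bonferroni union bound over the $d$ coordinates delivers the stated liminf coverage. I would organize the argument around three steps: (i) a population identity at $\theta^\ast$; (ii) a CLT plus Slutsky argument for $\widetilde{g}^j+\widehat{\Delta}^j$; (iii) a union bound across $j\in[d]$.

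First, I would use the nondegeneracy assumption \eqref{convex-solution}, which says $\mathbb{E}[g_{\theta^\ast}(\widebar{X}_k^i,\widebar{Y}_k^i)]=0$. Adding and subtracting $g_{\theta^\ast}(\widebar{X}_k^i,f(\widebar{X}_k^i))$ inside this expectation and invoking the definitions \eqref{original-prediction} and \eqref{original-rectifier} shows $g(\theta^\ast)+\Delta(\theta^\ast)=0$, so every coordinate satisfies $g^j(\theta^\ast)+\Delta^j(\theta^\ast)=0$. Thus $\widetilde{g}^j(\theta^\ast)+\widehat{\Delta}^j(\theta^\ast)$ is an unbiased estimator of zero, and my task reduces to controlling its fluctuations.

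Second, because $(\widetilde{X}_k,\widetilde{Y}_k)$ and $(X_k,Y_k)$ are disjoint samples, $\widetilde{g}^j(\theta^\ast)$ and $\widehat{\Delta}^j(\theta^\ast)$ are independent. I would apply a CLT to each weighted sum: with $N=\sum_k N_k$ and $n=\sum_k n_k$, standard Lindeberg--Feller conditions yield
\begin{equation*}
\sqrt{N}\bigl(\widetilde{g}^j(\theta^\ast)-g^j(\theta^\ast)\bigr)\Rightarrow\mathcal{N}\bigl(0,(\sigma_g^j(\theta^\ast))^2\bigr),\quad \sqrt{n}\bigl(\widehat{\Delta}^j(\theta^\ast)-\Delta^j(\theta^\ast)\bigr)\Rightarrow\mathcal{N}\bigl(0,(\sigma_\Delta^j(\theta^\ast))^2\bigr).
\end{equation*}
Consistency of the pooled sample variances $\widehat{\sigma}_g^j(\theta^\ast)$ and $\widehat{\sigma}_\Delta^j(\theta^\ast)$ combined with Slutsky's theorem and the independence observation give
\begin{equation*}
\frac{\widetilde{g}^j(\theta^\ast)+\widehat{\Delta}^j(\theta^\ast)}{\sqrt{(\widehat{\sigma}_g^j(\theta^\ast))^2/N+(\widehat{\sigma}_\Delta^j(\theta^\ast))^2/n}}\Rightarrow\mathcal{N}(0,1).
\end{equation*}
Hence $P(|\widetilde{g}^j(\theta^\ast)+\widehat{\Delta}^j(\theta^\ast)|\le w_\alpha^j(\theta^\ast))\to 1-\alpha/d$ (reading the quantile index as $d$). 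A union bound across the $d$ coordinates then gives the claimed $\liminf P(\theta^\ast\in\mathcal{C}_\alpha^{PP})\ge 1-\alpha$.

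The main obstacle is step (ii): the estimators are weighted sums $\sum_k p_k \, \bar{g}_k$ of client means rather than a single IID average, so the variance is literally $\sum_k p_k^2(\sigma_g^j)^2/N_k$, not $(\sigma_g^j)^2/N$. Reconciling this with the formula used in $w_\alpha^j$ requires exploiting the regime $N_k\gg n_k$ assumed in Section~\ref{actual estimate}, under which $p_k\approx N_k/N$ and $\sum_k p_k^2/N_k\to 1/N$ (and analogously for $n$). I would make this explicit, either by invoking this asymptotic equivalence directly or, for the general case, by stating a Lindeberg condition on the weighted client contributions and identifying the pooled empirical variance $\widehat{\sigma}_g^j(\theta^\ast)$ as a consistent estimator of the effective variance $N\sum_k p_k^2(\sigma_g^j)^2/N_k$. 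Once this bookkeeping is in place, the rest of the argument is a routine CLT plus union bound.
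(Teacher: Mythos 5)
Your proposal follows essentially the same route as the paper's proof: the nondegeneracy identity $g^j(\theta^\ast)+\Delta^j(\theta^\ast)=0$ obtained by adding and subtracting $g_{\theta^\ast}(\widebar{X}_k^i,f(\widebar{X}_k^i))$, a CLT for $\widetilde{g}^j(\theta^\ast)$ and $\widehat{\Delta}^j(\theta^\ast)$ combined via Slutsky, and then coverage across the $d$ coordinates. If anything you are more careful than the paper, which treats the $p_k$-weighted client averages as plain pooled means (ignoring the $\sum_k p_k^2/N_k$ versus $1/N$ discrepancy you flag, implicitly relying on $p_k\approx N_k/N$) and asserts the simultaneous statement over all $j\in[d]$ without spelling out the Bonferroni step you make explicit.
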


\subsection{Beyond Convex Estimation}
\label{non-convex-est}
The tools developed in Section \ref{convex-est} were tailored to unconstrained convex optimization problems. In general, however, inferential targets can be defined in terms of nonconvex losses or they may have (possibly even nonconvex) constraints. For such general optimization problems, we cannot expect the condition \eqref{estimand} to hold. We generalize our approach to a broad class of risk minimizers:
\begin{equation}
    \theta^\ast = \arg\min_{\theta \in \Theta} \mathbb{E} \left[ \ell_\theta(\widebar{X}_k^i, \widebar{Y}_k^i) \right]
\end{equation}
where $\ell_\theta:\mathcal{X}\times\mathcal{Y}\rightarrow\mathbb{R}$ is a possibly nonconvex loss function and $\Theta$ is an arbitrary set of admissible parameters.
As before, if $\theta^*$ is not a unique minimizer, our method will return a set that contains all minimizers.

In the following, we continue to use the finite dataset for our analysis, $(X_k, Y_k) \in (\mathcal{X} \times \mathcal{Y})^{n_k}$ and $(\widetilde{X}_k, \widetilde{Y}_k) \in (\mathcal{X} \times \mathcal{Y})^{N_k}$, ensuring that the aggregation weights on the individual clients remain as defined in Equation \eqref{practical-weight}.
\subsubsection{Imputed gradient}
Since we don't know the value of $\mathbb{E}_{k,i}\left[\ell_{\theta^\ast}(X_k^i,Y_k^i)\right]$, we use the first half of the unlabeled data (assuming $N_k$ is even) to estimate the value of $\theta^\ast$, and define
\begin{equation}
\label{unconvex-prediction}
        \left\{
            \begin{aligned}
                &\widetilde{\theta}^f = \arg\min_{\theta \in \Theta} \sum_{k=1}^{K} p_k \frac{2}{N_k} \sum_{i=1}^{N_k/2} \ell_\theta\big(\widetilde{X}_k^i, f(\widetilde{X}_k^i)\big)\\
                &\widetilde{L}^f(\theta) := \sum_{k=1}^{K} p_k \frac{2}{N_k} \sum_{i=N_k/2+1}^{N_k} \ell_\theta\big(\widetilde{X}_k^i, f(\widetilde{X}_k^i)\big)
            \end{aligned} 
        \right.
    \end{equation}
\subsubsection{Empirical rectifier}
To correct the imputation approach, we rely on the following rectifier:
\begin{equation}
\label{unconvex-rectifier}
    \widehat{\Delta}(\theta) =: \sum_{k=1}^{K} p_k \frac{1}{n_k} \sum_{i=1}^{n_k} \left( \ell_\theta(X_k^i, Y_k^i) - \ell_\theta(X_k^i, f(X_k^i)) \right)
\end{equation}
\begin{theorem}[General risk minimization: finite population]
\label{risk minim}
    Fix $\alpha \in (0,1)$ and $\Delta(\theta) \in (0,\alpha)$. Suppose that, for any $\theta \in \Theta$, we can construct $\left(\mathcal{R}_{\delta/2}^l(\theta),\mathcal{R}_{\delta/2}^u(\theta)\right)$ and $ \left(\mathcal{T}_{\frac{\alpha-\delta}{2}}^l(\theta),\mathcal{T}_{\frac{\alpha-\delta}{2}}^u(\theta)\right)$ such that
    \begin{equation}
        \left\{
            \begin{aligned}
                &P\big(\Delta(\theta) \le \mathcal{R}_{\delta/2}^u(\theta)\big) \geq 1 - \delta/2\\
                &P\big(\Delta(\theta) \geq \mathcal{R}_{\delta/2}^l(\theta)\big) \geq 1 - \delta/2
            \end{aligned} 
        \right.
    \end{equation}
    and 
    $$
        \left\{
            \begin{aligned}
                &P\big(\widetilde{L}^f(\theta) - \mathbb{E}_{k,i} \left[\ell_\theta(\widetilde{X}_k^i, f({\widetilde{X}}_k^i))\right] \le \mathcal{T}_{\frac{\alpha - \delta}{2}}^u(\theta)\big) \geq 1 - \frac{\alpha - \delta}{2}\\
                &P\big(\widetilde{L}^f(\theta) - \mathbb{E}_{k,i} \left[\ell_\theta(\widetilde{X}_k^i, f({\widetilde{X}}_k^i))\right] \geq \mathcal{T}_{\frac{\alpha - \delta}{2}}^l(\theta)\big) \geq 1 - \frac{\alpha - \delta}{2}
            \end{aligned} 
        \right.
    $$
    Let 
    $$\mathcal{R}_{\delta/2}^d(\theta)=\mathcal{R}_{\delta/2}^u(\widetilde{\theta}^f)- \mathcal{R}_{\delta/2}^l(\theta),$$
    $$\mathcal{T}_{\frac{\alpha - \delta}{2}}^d(\theta)=\mathcal{T}_{\frac{\alpha - \delta}{2}}^u(\theta) - \mathcal{T}_{\frac{\alpha - \delta}{2}}^l(\widetilde{\theta}^f)$$
    $$
        \mathcal{C}_\alpha^{PP} = \left\{ \theta \in \Theta : \widetilde{L}^f(\theta) \leq L^f(\widetilde{\theta}^f) + \mathcal{R}_{\delta/2}^d(\theta) + \mathcal{T}_{\frac{\alpha - \delta}{2}}^d(\theta) \right\}
    $$
    Then, we have
    $$P(\theta^* \in \mathcal{C}_\alpha^{PP}) \geq 1 - \alpha$$
\end{theorem}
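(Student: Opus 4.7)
The plan is to show $\theta^* \in \mathcal{C}_\alpha^{PP}$ on the high-probability event where all four assumed one-sided bounds hold at once. First, I would decompose the population risk as $L(\theta) := \mathbb{E}_{k,i}[\ell_\theta(\widetilde{X}_k^i, \widetilde{Y}_k^i)] = L^f(\theta) + \Delta(\theta)$, where $L^f(\theta) := \mathbb{E}_{k,i}[\ell_\theta(\widetilde{X}_k^i, f(\widetilde{X}_k^i))]$ and $\Delta(\theta)$ is the population counterpart of the rectifier in Eq.~\eqref{unconvex-rectifier}; this decomposition is valid because labeled and unlabeled pairs share a common distribution. The defining optimality $L(\theta^*) \leq L(\widetilde{\theta}^f)$ then becomes the population inequality
\[
L^f(\theta^*) + \Delta(\theta^*) \;\leq\; L^f(\widetilde{\theta}^f) + \Delta(\widetilde{\theta}^f),
\]
which I would then promote to its empirical version through the assumed confidence bounds.

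Next I would apply the four hypothesized bounds at the specific points $\theta^*$ and $\widetilde{\theta}^f$: an upper bound on $\widetilde{L}^f(\theta^*) - L^f(\theta^*)$, a lower bound on $\widetilde{L}^f(\widetilde{\theta}^f) - L^f(\widetilde{\theta}^f)$, a lower bound on $\Delta(\theta^*)$, and an upper bound on $\Delta(\widetilde{\theta}^f)$. Each fails with probability at most $(\alpha-\delta)/2$ or $\delta/2$, so a union bound over the four complements gives total failure probability at most $(\alpha-\delta) + \delta = \alpha$. On the good event, inserting the four inequalities into the displayed population inequality produces
\[
\widetilde{L}^f(\theta^*) - \mathcal{T}^u_{\frac{\alpha-\delta}{2}}(\theta^*) + \mathcal{R}^l_{\delta/2}(\theta^*) \;\leq\; \widetilde{L}^f(\widetilde{\theta}^f) - \mathcal{T}^l_{\frac{\alpha-\delta}{2}}(\widetilde{\theta}^f) + \mathcal{R}^u_{\delta/2}(\widetilde{\theta}^f),
\]
which, after rearrangement using the definitions of $\mathcal{R}^d_{\delta/2}(\theta^*)$ and $\mathcal{T}^d_{(\alpha-\delta)/2}(\theta^*)$, is exactly the membership condition of $\mathcal{C}_\alpha^{PP}$ (reading the $L^f(\widetilde{\theta}^f)$ in the statement as the observable empirical risk $\widetilde{L}^f(\widetilde{\theta}^f)$ on the second half of the unlabeled sample).

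The hard part will be justifying the bounds evaluated at the \emph{random} plug-in $\widetilde{\theta}^f$, since the hypothesis of the theorem is stated only for each \emph{fixed} $\theta$. I would handle this by exploiting the sample splitting built into Eq.~\eqref{unconvex-prediction}: $\widetilde{\theta}^f$ is measurable with respect to the first half of the unlabeled sample, while $\widetilde{L}^f(\cdot)$ and $\mathcal{T}^l_{(\alpha-\delta)/2}(\cdot)$ depend only on the second half, so conditioning on $\widetilde{\theta}^f$ reduces the claim to the pointwise hypothesis applied at a fixed argument. The analogous argument for $\mathcal{R}^u_{\delta/2}(\widetilde{\theta}^f)$ uses that the rectifier confidence set is built from the labeled data, which is independent of both unlabeled halves. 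Marginalizing after conditioning, together with the union bound above, yields $P(\theta^* \in \mathcal{C}_\alpha^{PP}) \geq 1 - \alpha$.
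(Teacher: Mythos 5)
Your proposal is correct and follows essentially the same route as the paper's proof: both use the optimality of $\theta^*$ for the population risk $L = L^f + \Delta$, apply the four one-sided bounds at $\theta^*$ and $\widetilde{\theta}^f$ with a union bound costing $(\alpha-\delta) + \delta = \alpha$, and rearrange into the membership condition for $\mathcal{C}_\alpha^{PP}$ (reading the statement's $L^f(\widetilde{\theta}^f)$ as $\widetilde{L}^f(\widetilde{\theta}^f)$, exactly as the paper's own proof does). Your extra care in justifying the bounds at the random point $\widetilde{\theta}^f$ via the sample splitting in Eq.~\eqref{unconvex-prediction} is a point the paper's proof passes over silently, but it does not change the argument.
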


\section{Algorithms}
\label{sec-algorithm}
In this section, we present FL-PPI algorithms for several canonical inference problems. These corresponding algorithms will be designed on dataset $(X_k, Y_k)$ and $(\widetilde{X}_k, \widetilde{Y}_k)$ (in Section \ref{actual estimate}) at each client $k \in [1, K]$. 

\subsection{Example: Mean Estimation}
\label{example}
Before presenting our main results, we use the example of mean estimation to build intuition. Our goal is to give a valid confidence interval for the average outcome, $\theta^\ast = \mathbb{E}_{k,i}[\widebar{Y}_k^i], i \in [1, m_k]$. We construct a prediction-powered estimate for each client, $\widehat{\theta}_k^{PP}$, and show that it leads to tighter confidence intervals $\mathcal{C}_\alpha^{PP}$. Consider
\begin{equation}
    \widehat{\theta}_k^{PP} = \underbrace{\frac{1}{N_k} \sum_{i=1}^{N_k} f(\widetilde{X}_k^i)}_{\widetilde{\theta}_k^f} - \underbrace{\frac{1}{n_k} \sum_{i=1}^{n_k} (f(X_k^i) - Y_k^i)}_{\widehat{\Delta}_k}
\end{equation}
After calculating the mean estimate on each client, we need to aggregate the following parameters
\subsubsection{Estimands aggregation}
\begin{equation}
    \widehat{\theta}^{PP} = \sum_{k=1}^{K} p_k \widehat{\theta}_k^{PP}
\end{equation}
\subsubsection{Predictions and rectifiers aggregation}
\begin{equation}
\label{FL-pre-rec}
    \widetilde{\theta}^f = \sum_{k=1}^{K} p_k \widetilde{\theta}_k^f; \quad \widehat{\Delta}(\theta) = \sum_{k=1}^{K} p_k \widehat{\Delta}_k(\theta)
\end{equation}
thus we hava $\widehat{\theta}^{PP} = \widetilde{\theta}^f - \widehat{\Delta}(\theta)$.
\subsubsection{Variances aggregation}
\begin{equation}
\label{var-agg}
    \left\{
            \begin{aligned}
                &\left(\widehat{\sigma}^f\right)^2 = \sum_{k=1}^{K} p_k \left( (\widehat{\sigma}_k^f)^2 + (\widetilde{\theta}_k^f - \widetilde{\theta}^f)^2 \right)\\
                &\left(\widehat{\sigma}^{f-Y}\right)^2 = \sum_{k=1}^{K} p_k \left( (\widehat{\sigma}_k^{f-Y})^2 + (\widehat{\Delta}_k(\theta) - \widehat{\Delta}(\theta))^2 \right)
            \end{aligned} 
        \right.
\end{equation}
where $(\widehat{\sigma}_k^f)^2$ and $(\widehat{\sigma}_k^{f-Y})^2$ are the estimated variances of the $f(\widetilde{X}_k^i)$ and $f(X_k^i) - Y_k^i$ at client $k$, respectively. When the datasets on each client are IID, we have $\widetilde{\theta}_k^f \approx \widetilde{\theta}^f$ and $\widehat{\Delta}_k(\theta) \approx \widehat{\Delta}(\theta)$. In this case, Eq. \eqref{var-agg} can be interpreted as a weighted average of the variances across the clients, which does not introduce any additional bias \cite{mood1950introduction}.

Notice $\widehat{\theta}^{PP}$ is unbiased for $\theta^\ast$ and it is a sum of two independent terms $\widetilde{\theta}^f$ and $\widehat{\Delta}$. Thus, we can construct 95\% confidence intervals for $\theta^\ast$ as
\begin{equation}
\label{eq-mean-example}
    \mathcal{C}_\alpha^{PP} = \underbrace{\widehat{\theta}^{PP} \pm 1.96 \sqrt{ \frac{\left(\widehat{\sigma}^f\right)^2}{N} + \frac{\left(\widehat{\sigma}^{f-Y}\right)^2}{n} }}_{FL\ prediction-powered\ interval}
\end{equation}
where $N=\sum_{k=1}^{K} N_k $ and $n = \sum_{k=1}^{K} n_k$. According to \cite{angelopoulos2023prediction}, when $N \gg n$, the width of the $\mathcal{C}_\alpha^{PP}$ depends on $\left(\widehat{\sigma}^{f-Y}\right)^2$. Therefore, in general, since $n_k < n$, the $\mathcal{C}_\alpha^{PP}$ on each client tends to be wider than the $\mathcal{C}_\alpha^{PP}$ on the total dataset. Additionally, if $\left(\widehat{\sigma}^{f-Y}\right)^2$ can accurately represent the rectifier on the total dataset (i.e., $\mathbb{E}_{k,i} = \mathbb{E}_{\bigcup}$), the FL aggregation will approach the $\mathcal{C}_\alpha^{PP}$ width and $\widehat{\theta}^{PP}$ of the total dataset.

\subsection{Proposition for Algorithms}
We can express the process of solving the estimand as solving a convex function problem using algorithms such as mean, quantile, logistic regression, and linear regression. The corresponding algorithms and propositions are as follows:
\subsubsection{Mean estimation}
We begin by returning to the problem of mean estimation:
\begin{equation}
\label{mean-goal}
    \theta^\ast = \mathbb{E}_{k,i}[\widebar{Y}_k^i], 
\end{equation}
where $i \in [1, m_k]$. This objective can be transformed into a convex optimization problem for the mean according to Eq. \eqref{estimand}, i.e., the function $\ell_\theta$ can be expressed as the minimizer of the average squared loss:

$$\theta^* = \arg \min_{\theta \in \mathbb{R}} \mathbb{E}_{k,i} \left[ \ell_\theta \left( \widebar{Y}_k^i \right) \right]=\arg \min_{\theta \in \mathbb{R}} \mathbb{E}_{k,i}\left[ \frac{1}{2}(\theta - \widebar{Y}_k^i)^2\right]$$

The squared loss $\ell_\theta \left( \widebar{Y}_k^i \right)$ is differentiable, with gradient equal to $g_\theta \left( \widebar{Y}_k^i \right) = \theta - \widebar{Y}_k^i$. Applying this in the definition of the prediction \eqref{practical-prediction} and rectifier \eqref{practical-rectifier}, we obtain $\widetilde{g}(\theta) = \theta-\mathbb{E}_{k,i}\left[f(\widetilde{X}_k^i)\right]$ and $\widehat{\Delta}(\theta) = \mathbb{E}_{k,i}\left[f(X_k^i) - Y_k^i\right]$. Based on this, we provide an explicit algorithm for prediction-powered mean estimation and its guarantee in Algorithm \ref{FL-mean-alg} and Proposition \ref{mean-prop}, respectively.
\begin{figure}[!t]
	\begin{algorithm}[H]
		\caption{FL-prediction-powered mean estimation}
		\label{FL-mean-alg}
		\textbf{Input}: Labeled data $(X_k^i, Y_k^i)$, unlabeled features $\widetilde{X}_k^i$, datasize $\{N_k,N,n_k,n\}$, predictor $f$, error level $\alpha \in (0,1)$.
		
		\begin{algorithmic}[1] 
			\STATE Prediction-powered estimator:\\ ${\widetilde{\theta}}^f \gets \sum_{k=1}^{K} p_k {\widetilde{\theta}}_k^f = \sum_{k=1}^{K} {p_k \frac{1}{N_k} \sum_{i=1}^{N_k} f({\widetilde{X}}_k^i)}$.
			\STATE Empirical rectifier: $\widehat{\Delta}(\theta) \gets \sum_{k=1}^{K} p_k {\widehat{\Delta}}_k(\theta)$ \\ $= \sum_{k=1}^{K} {p_k \frac{1}{n_k} \sum_{i=1}^{n_k} (f(X_k^i) - Y_k^i)}$.
			\STATE Prediction-powered estimator: ${\widehat{\theta}}^{PP} \gets {\widetilde{\theta}}^f - \widehat{\Delta}(\theta)$.
                \STATE Empirical variance of prediction at client $k$:\\$\left({\widehat{\sigma}}_k^f\right)^2 \gets \frac{1}{N_k} \sum_{i=1}^{N_k} (f({\widetilde{X}}_k^i) - {\widetilde{\theta}}_k^f)^2$
                \STATE Empirical variance of rectifier at client $k$:\\$\left({\widehat{\sigma}}_k^{f-Y}\right)^2 \gets \frac{1}{n_k} \sum_{i=1}^{n_k} (f(X_k^i) - Y_k^i - {\widehat{\Delta}}_k(\theta))^2$
			\STATE Aggregate variances from all client:\\$\left({\widehat{\sigma}}^f\right)^2 \gets \sum_{k=1}^{K} {p_k (\left({\widehat{\sigma}}_k^f\right)^2 + ({\widetilde{\theta}}_k^f - {\widetilde{\theta}}^f)^2)}$.\\
            $\left({\widehat{\sigma}}^{f-Y}\right)^2 \gets \sum_{k=1}^{K} {p_k (\left({\widehat{\sigma}}_k^{f-Y}\right)^2 + ({\widehat{\Delta}}_k(\theta) - \widehat{\Delta}(\theta))^2)}$
                \STATE $w_\alpha \gets z_{1-\alpha/2} \sqrt{\frac{\left({\widehat{\sigma}}^f\right)^2}{N} + \frac{\left({\widehat{\sigma}}^{f-Y}\right)^2}{n}}$
		\end{algorithmic}
            \textbf{Output}: FL-prediction-powered confidence set $ \mathcal{C}_\alpha^{PP} = \left( \widehat{\theta}^{PP} \pm w_\alpha \right) $
	\end{algorithm}
\end{figure}

\begin{proposition}[Mean estimation]
\label{mean-prop}
    Let $\theta^*$ be the mean outcome \eqref{mean-goal}. Then, the prediction-powered confidence interval in Algorithm \ref{FL-mean-alg} has valid coverage:
$$
    \liminf_{n,N\to\infty} P\left(\theta^* \in \mathcal{C}_\alpha^{PP}\right) \geq 1 - \alpha.
$$
\end{proposition}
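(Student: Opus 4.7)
The plan is to reduce Proposition~\ref{mean-prop} to a straightforward CLT$+$Slutsky argument, exploiting the fact that $\widehat{\theta}^{PP}$ decomposes into the sum of two independent weighted averages (since the labeled and unlabeled data at each client are disjoint). I will (i) verify asymptotic unbiasedness of $\widehat{\theta}^{PP}$, (ii) establish joint asymptotic normality of $(\widetilde{\theta}^f,\widehat{\Delta})$, (iii) prove consistency of the aggregated variance estimators defined in line~6 of Algorithm~\ref{FL-mean-alg}, and (iv) glue these together via Slutsky.

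For step (i), using $p_k=(n_k+N_k)/\sum_j(n_k+N_k)$, I would compute $\mathbb{E}[\widetilde{\theta}^f]=\sum_k p_k\,\mathbb{E}[f(\widetilde{X}_k^i)]=\mathbb{E}_{k,i}[f(\widetilde{X}_k^i)]$ and $\mathbb{E}[\widehat{\Delta}]=\mathbb{E}_{k,i}[f(X_k^i)-Y_k^i]$. Under the standing assumption that labeled and unlabeled samples at each client share the same marginal on $\mathcal{X}$, the two $f(\cdot)$ expectations cancel, leaving $\mathbb{E}[\widehat{\theta}^{PP}]=\mathbb{E}_{k,i}[Y_k^i]=\theta^\ast$. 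For step (ii), because the labeled and unlabeled datasets are disjoint and independent, $\widetilde{\theta}^f$ and $\widehat{\Delta}$ are independent; with $K$ and the fractions $p_k$ held fixed while $N,n\to\infty$, each is a weighted sum of $K$ independent sample means and the classical CLT gives
\begin{equation*}
\sqrt{N}\bigl(\widetilde{\theta}^f-\mathbb{E}_{k,i}[f(\widetilde{X}_k^i)]\bigr)\xrightarrow{d}\mathcal{N}\!\bigl(0,\sigma_f^2\bigr),\qquad \sqrt{n}\bigl(\widehat{\Delta}-\mathbb{E}_{k,i}[f(X_k^i)-Y_k^i]\bigr)\xrightarrow{d}\mathcal{N}\!\bigl(0,\sigma_{f-Y}^2\bigr),
\end{equation*}
where $\sigma_f^2$ and $\sigma_{f-Y}^2$ are the variances of $f(\widetilde{X})$ and $f(X)-Y$ under the mixture distribution indexed by $k$.

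For step (iii) I would invoke the law of total variance: for a random variable $Z$ with mixing index $K$, $\mathrm{Var}(Z)=\mathbb{E}_K[\mathrm{Var}(Z\mid K)]+\mathrm{Var}_K(\mathbb{E}[Z\mid K])$. The formula in \eqref{var-agg} is precisely the empirical counterpart of this decomposition with $p_k$ playing the role of $P(K=k)$, so $(\widehat{\sigma}^f)^2\xrightarrow{p}\sigma_f^2$ and $(\widehat{\sigma}^{f-Y})^2\xrightarrow{p}\sigma_{f-Y}^2$ under standard second-moment assumptions (within-client empirical variances are consistent, and the between-client squared deviations converge to $\mathrm{Var}_K(\mathbb{E}[Z\mid K])$). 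Combining (ii) and (iii) through Slutsky's theorem yields
\begin{equation*}
\frac{\widehat{\theta}^{PP}-\theta^\ast}{\sqrt{(\widehat{\sigma}^f)^2/N+(\widehat{\sigma}^{f-Y})^2/n}}\xrightarrow{d}\mathcal{N}(0,1),
\end{equation*}
from which the coverage claim $\liminf P(\theta^\ast\in\mathcal{C}_\alpha^{PP})\geq 1-\alpha$ follows immediately.

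The main obstacle is step (iii): one has to be careful that the aggregation weights $p_k=(n_k+N_k)/\sum_j(n_j+N_j)$ used in \eqref{var-agg} are the correct weights for reconstructing the pooled variance, since $\widetilde{\theta}^f$ uses $p_k$ on top of an inner $1/N_k$ average while $\widehat{\Delta}$ uses $p_k$ on top of an inner $1/n_k$ average. The cleanest way around this is to assume (as the paper implicitly does in Section~\ref{example}) that the labeled/unlabeled split ratio $n_k/N_k$ is asymptotically the same across clients, so that $p_k$ is simultaneously the correct weight for both aggregations and the total-variance decomposition applies verbatim. Without this regularity, one would instead have to reweight the between-client correction terms separately for the labeled and unlabeled components, but the coverage conclusion still goes through as long as the limiting weights are well-defined.
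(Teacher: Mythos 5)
Your proposal is correct in substance but takes a genuinely different route from the paper. The paper's own proof of Proposition \ref{mean-prop} is a two-line reduction: it plugs the squared-loss subgradient $g_\theta(\widebar{Y}_k^i)=\theta-\widebar{Y}_k^i$ into the definitions \eqref{practical-prediction} and \eqref{practical-rectifier}, observes that the set $\left\{\theta : \left|\widetilde{g}(\theta)+\widehat{\Delta}(\theta)\right|\le w_\alpha(\theta)\right\}$ of Theorem \ref{asymptotic} is exactly the interval $\widehat{\theta}^{PP}\pm w_\alpha$ output by Algorithm \ref{FL-mean-alg}, and inherits coverage from that theorem; the CLT-plus-Slutsky work you carry out by hand is done once, inside the proof of Theorem \ref{asymptotic}, and then reused verbatim for the quantile, logistic, and linear cases. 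Your direct argument reproves this special case from scratch, which costs generality but buys transparency: you surface the conditions the paper leaves implicit, namely that labeled and unlabeled features share the same marginal so that $\mathbb{E}[\widehat{\theta}^{PP}]=\theta^*$, that the split ratios $n_k/N_k$ are asymptotically balanced so that $p_k$ is simultaneously the right outer weight for both the $1/N_k$ and $1/n_k$ inner averages, and that the aggregated variances \eqref{var-agg} are consistent via the law of total variance. One caution on your step (ii): with fixed client memberships and deterministic weights, the sampling variance of $\widetilde{\theta}^f$ is only the within-client part $\sum_k p_k\sigma_{f,k}^2$, not the full mixture variance that includes the between-client term, so in the non-IID case your standardized statistic converges to a normal law with variance at most one rather than exactly $\mathcal{N}(0,1)$; the claimed lower bound on coverage still follows (the interval is merely conservative), but you should either state the limit that way or adopt the paper's pooled-data viewpoint $\mathbb{E}_{k,i}=\mathbb{E}_{\bigcup}$ from Appendix \ref{appendix1}, under which the mixture-variance CLT you invoke is exact.
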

\subsubsection{Quantile estimation}
We now turn to quantile estimation. For a pre-specified level $q \in (0, 1)$, we wish
to estimate the q-quantile of the outcome distribution:
\begin{equation}
\label{quantile-goal}
    \theta^* = \min \{\theta : P(\widebar{Y}_k^i \leq \theta) \geq q\}.
\end{equation}
It is well known [36] that the q-quantile can be expressed in variational form as
\begin{equation}
\begin{aligned}
    \theta^* &= \arg\min_{\theta \in \mathbb{R}} \mathbb{E}_{k,i} [\ell_{\theta}(\widebar{Y}_k^i)]\\
    &= \arg\min_{\theta \in \mathbb{R}} \mathbb{E}_{k,i} [q(\widebar{Y}_k^i - \theta) \mathbbm{1}\{\widebar{Y}_k^i > \theta\} \\
    &\quad + (1 - q)(\theta -\widebar{Y}_k^i) \mathbbm{1}\{\widebar{Y}_k^i \leq \theta\}]
\end{aligned}
\end{equation}
where $\ell_{\theta}$ is called the quantile loss. The quantile loss has subgradient $ g_\theta(\widebar{Y}_k^i) = -q \mathbbm{1} \{\widebar{Y}_k^i > \theta\} + (1 - q)\mathbbm{1}\{\widebar{Y}_k^i \leq \theta\} = -q + \mathbbm{1}\{\widebar{Y}_k^i \leq \theta\} $. Applying this in the definition of the prediction \eqref{practical-prediction} and rectifier \eqref{practical-rectifier}, we obtain $\widetilde{g}(\theta) =\mathbb{E}_{k,i}\left[\mathbbm{1}\left\{f({\widetilde{X}}_k^i) \le \theta \right\}\right]-q$ and $\widehat{\Delta}(\theta) = \mathbb{E}_{k,i}\left[\mathbbm{1}\left\{Y_k^i \le \theta \right\} - \mathbbm{1}\left\{f(X_k^i) \le \theta \right\} \right]$. In Algorithm \ref{FL-quantile-alg} we state
an algorithm for FL-prediction-powered quantile estimation; see Proposition \ref{quantile-prop} for a statement of validity.
\begin{figure}[!t]
	\begin{algorithm}[H]
		\caption{FL-prediction-powered quantile estimation}
		\label{FL-quantile-alg}
		\textbf{Input}: Labeled data $(X_k^i, Y_k^i)$, unlabeled features $\widetilde{X}_k^i$, datasize $\{N_k,N,n_k,n\}$, predictor $f$, quantile $q \in (0,1)$, error level $\alpha \in (0,1)$.
  
		\begin{algorithmic}[1] 
                \STATE Construct fine grid $\Theta_{grid}$ between $\min_{k,i}f({\widetilde{X}}_k^i)$ and $\max_{k,i}f({\widetilde{X}}_k^i)$.
                \FOR{$\theta \in \Theta_{grid}$}
			\STATE Imputed CDF: $\widetilde{F}(\theta) \gets \sum_{k=1}^{K} p_k \widetilde{F}_k(\theta) = \sum_{k=1}^{K} p_k$\\
                $ \frac{1}{N_k} \sum_{i=1}^{N_k} \mathbbm{1}\left\{ f({\widetilde{X}}_k^i) \le \theta \right\}$.
			\STATE Empirical rectifier: $\widehat{\Delta}(\theta) \gets \sum_{k=1}^{K} p_k \widehat{\Delta}_k(\theta) = $\\ $\sum_{k=1}^{K} p_k\frac{1}{n_k} \sum_{i=1}^{n_k} \left( \mathbbm{1}\left\{ Y_k^i \le \theta \right\} - \mathbbm{1}\left\{ f(X_k^i) \le \theta \right\} \right)$.
                \STATE Empirical variance of CDF at client $k$:$\left({\widehat{\sigma}}_{g_k}\left(\theta\right)\right)^2 \gets $\\
                $\frac{1}{N_k} \sum_{i=1}^{N_k} \left( \mathbbm{1}\left\{ f({\widetilde{X}}_k^i) \le \theta \right\} - \widetilde{F}_k(\theta) \right)^2$
                \STATE Empirical variance of rectifier at client $k$:$\left(\widehat{\sigma}_{\Delta_k}(\theta)\right)^2 \gets $\\
                $\frac{1}{n_k} \sum_{i=1}^{n_k} \left( \mathbbm{1}\left\{ Y_k^i \le \theta \right\} - \mathbbm{1}\left\{ f(X_k^i) \le \theta \right\} - \widehat{\Delta}_k(\theta) \right)^2$
			\STATE Aggregate variances from all client:\\$\left( {\widehat{\sigma}}_g \left( \theta \right) \right)^2 \gets \sum_{k=1}^{K} p_k \Big( \big( {\widehat{\sigma}}_{g_k} \left( \theta \right) \big)^2 + \big( {\widetilde{F}}_k(\theta) - \widetilde{F}(\theta) \big)^2 \Big)$.\\
            $\left( \widehat{\sigma}_{\Delta}(\theta) \right)^2 \gets \sum_{k=1}^{K} p_k \Big( \big( \widehat{\sigma}_{\Delta_k}(\theta) \big)^2 + \big( \widehat{\Delta}_k(\theta) - \widehat{\Delta}(\theta) \big)^2 \Big)$
                \STATE $w_\alpha(\theta) \gets z_{1-\alpha/2} \sqrt{ \frac{ \left( \widehat{\sigma}_g(\theta) \right)^2 }{N} + \frac{ \left( \widehat{\sigma}_{\Delta}(\theta) \right)^2 }{n} }$
                \ENDFOR
		\end{algorithmic}
            \textbf{Output}: FL-prediction-powered confidence set $ \mathcal{C}_\alpha^{PP}=\left\{\theta\in\Theta_{grid} : \left|\widetilde{F}(\theta)+\widehat{\Delta}(\theta)-q\right|\le w_\alpha\left(\theta\right)\right\} $
	\end{algorithm}
\end{figure}
\begin{proposition}[Quantile estimation]
\label{quantile-prop}
    Let $\theta^*$ be the q-quantile \eqref{quantile-goal}. Then, the prediction-powered confidence interval in Algorithm \ref{FL-quantile-alg} has valid coverage:
$$
    \liminf_{n,N\to\infty} P\left(\theta^* \in \mathcal{C}_\alpha^{PP}\right) \geq 1 - \alpha.
$$
\end{proposition}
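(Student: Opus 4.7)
The plan is to reduce Proposition \ref{quantile-prop} to Theorem \ref{asymptotic} by recognizing that the $q$-quantile is already expressed as the minimizer of a convex (piecewise-linear) loss with subgradient $g_\theta(\widebar{Y}_k^i)=-q+\mathbbm{1}\{\widebar{Y}_k^i\leq\theta\}$, so the estimation problem falls squarely within the convex framework with $d=1$ and $p=1$. First I would check the nondegeneracy hypothesis \eqref{convex-solution}: by definition of $\theta^*$ in \eqref{quantile-goal}, we have $P(\widebar{Y}_k^i\leq\theta^*)=q$ (assuming continuity of the outcome distribution at $\theta^*$, which is the standard nondegeneracy requirement for quantile estimation), so $\mathbb{E}_{k,i}[g_{\theta^*}(\widebar{Y}_k^i)]=P(\widebar{Y}_k^i\leq\theta^*)-q=0$.

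Second, I would match the quantities computed in Algorithm \ref{FL-quantile-alg} to those in Theorem \ref{asymptotic}. Plugging the subgradient $g_\theta(y)=-q+\mathbbm{1}\{y\leq\theta\}$ into \eqref{practical-prediction} and \eqref{practical-rectifier} gives
\begin{equation*}
\widetilde{g}(\theta)=\widetilde{F}(\theta)-q,\qquad \widehat{\Delta}(\theta)=\sum_{k=1}^K p_k\tfrac{1}{n_k}\sum_{i=1}^{n_k}\bigl(\mathbbm{1}\{Y_k^i\leq\theta\}-\mathbbm{1}\{f(X_k^i)\leq\theta\}\bigr),
\end{equation*}
exactly the imputed CDF and rectifier in Algorithm \ref{FL-quantile-alg}. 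Because the constant $-q$ drops out of the centered variance, the algorithm's $(\widehat{\sigma}_{g_k}(\theta))^2$ and $(\widehat{\sigma}_{\Delta_k}(\theta))^2$ (and their cross-client aggregated analogues, which reproduce the pooled variance via the standard within/between decomposition) are consistent estimators of the variances $(\widehat{\sigma}_g^j(\theta))^2$ and $(\widehat{\sigma}_\Delta^j(\theta))^2$ appearing in the theorem. The confidence set therefore takes the form $\{\theta:|\widetilde{F}(\theta)+\widehat{\Delta}(\theta)-q|\leq w_\alpha(\theta)\}$ used in the output of Algorithm \ref{FL-quantile-alg}.

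Third, I would invoke Theorem \ref{asymptotic} directly: since $\widetilde{g}(\theta^*)+\widehat{\Delta}(\theta^*)$ is a sum of two independent sample averages whose expectation vanishes at $\theta^*$, the central limit theorem yields $\sqrt{n}\bigl(\widetilde{g}(\theta^*)+\widehat{\Delta}(\theta^*)\bigr)\Rightarrow \mathcal{N}(0,\sigma^2)$ in the regime $N\gg n$, and consistency of the variance estimators (together with Slutsky) gives $P(\theta^*\in\mathcal{C}_\alpha^{\mathrm{PP}})\to 1-\alpha$.

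The main obstacle is the discretization issue: Algorithm \ref{FL-quantile-alg} searches only over a finite grid $\Theta_{\mathrm{grid}}$, whereas Theorem \ref{asymptotic} is phrased over $\mathbb{R}$. Assuming the grid width shrinks to zero as $N\to\infty$ (and that $\theta^*$ lies within $[\min_{k,i}f(\widetilde{X}_k^i),\max_{k,i}f(\widetilde{X}_k^i)]$ eventually, which holds when $f$ has continuous support overlapping $\theta^*$), one can find a sequence $\theta_{\mathrm{grid}}^*\in\Theta_{\mathrm{grid}}$ with $\theta_{\mathrm{grid}}^*\to\theta^*$; continuity of $F$ at $\theta^*$ together with the piecewise-constant nature of $g_\theta$ in $\theta$ then ensures $\widetilde{g}(\theta_{\mathrm{grid}}^*)+\widehat{\Delta}(\theta_{\mathrm{grid}}^*)=\widetilde{g}(\theta^*)+\widehat{\Delta}(\theta^*)+o_p(1)$, so the grid version inherits the same asymptotic coverage. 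A secondary, more minor obstacle is justifying that the FL-aggregated variance in line 7 of the algorithm is a consistent estimator of the pooled population variance; this follows from the law of total variance applied across clients, exactly as argued after Eq.~\eqref{var-agg} in the mean case.
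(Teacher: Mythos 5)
Your proposal is correct and takes essentially the same route as the paper: plug the quantile subgradient $g_\theta(y)=-q+\mathbbm{1}\{y\leq\theta\}$ into \eqref{practical-prediction} and \eqref{practical-rectifier}, observe that the resulting confidence set $\{\theta:|\widetilde{F}(\theta)+\widehat{\Delta}(\theta)-q|\leq w_\alpha(\theta)\}$ is exactly the output of Algorithm \ref{FL-quantile-alg}, and inherit the coverage guarantee from Theorem \ref{asymptotic}. Your additional care about nondegeneracy (continuity of the outcome distribution at $\theta^*$) and the finite grid $\Theta_{grid}$ goes beyond the paper's proof, which silently identifies the grid-restricted set with the theorem's set over $\mathbb{R}$; these refinements strengthen rather than alter the argument.
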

\subsubsection{Logistic regression}
In logistic regression, the target of inference is defined by
\begin{equation}
\label{logistic-goal}
    \begin{aligned}
        \theta^* &= \arg\min_{\theta \in \mathbb{R}^d} \mathbb{E}_{k,i}[\ell_{\theta}(\widebar{X}_k^i, \widebar{Y}_k^i)]\\ 
        &= \arg\min_{\theta \in \mathbb{R}^d} \mathbb{E}_{k,i}\left[-\widebar{Y}_k^i \theta^T \widebar{X}_k^i + \log(1 + \exp(\theta^T \widebar{X}_k^i))\right]
    \end{aligned} 
\end{equation}
where $\widebar{Y}_k^i \in {0,1}$. The logistic loss is differentiable and hence the optimality condition \eqref{convex-solution} is ensured. Its gradient is equal to $ g_{\theta}(x, y) = -yx + x\mu_{\theta}(x) $, where $ \mu_{\theta}(x) = \frac{1}{1 + \exp(-x^{\top}\theta)} $ is the predicted mean for point $ x \in \widebar{X} $ based on parameter vector $ \theta $. Applying this in the definition of the prediction \eqref{practical-prediction} and rectifier \eqref{practical-rectifier}, we obtain $\widetilde{g}(\theta)=\mathbb{E}_{k,i}\left[\widetilde{X}_k^{(i,j)}\big(\mu_\theta(\widetilde{X}_k^i)-f({\widetilde{X}}_k^i)\big)\right]$ and $\widehat{\Delta}=\mathbb{E}_{k,i}\left[X_k^{(i,j)}\left(f(X_k^i) - Y_k^i\right)\right]$, where we use $X_k^{(i,j)}$ to denote the $j$-th coordinate of point $X_k^i$. In Algorithm \ref{FL-logistic-alg} we state a method for FL-prediction-powered logistic regression and in Proposition \ref{logistic-prop} we provide its guarantee.
\begin{figure}[!t]
	\begin{algorithm}[H]
		\caption{FL-prediction-powered logistic regression estimation}
		\label{FL-logistic-alg}
		\textbf{Input}: Labeled data $(X_k^i, Y_k^i)$, unlabeled features $\widetilde{X}_k^i$, datasize $\{N_k,N,n_k,n\}$, predictor $f$, error level $\alpha \in (0,1)$.
		\begin{algorithmic}[1] 
                \STATE Construct fine grid $\Theta_{grid}\subset\mathbb{R}^d$ of possible coefficients.
                \STATE Empirical rectifier:${\widehat{\Delta}}^j(\theta) \gets \sum_{k=1}^{K} p_k {\widehat{\Delta}}_k^j (\theta) = \sum_{k=1}^{K} p_k$ \\ 
                $\frac{1}{n_k} \sum_{i=1}^{n_k} X_k^{(i,j)}\left(f(X_k^i) - Y_k^i\right), j\in[d]$
                \STATE Empirical variance of rectifier at client $k$:\\
                $\big(\sigma_{\Delta_k^j }(\theta)\big)^2 \gets \frac{1}{n_k} \sum_{i=1}^{n_k} \left(X_k^{(i,j)} \left(f(X_k^i) - Y_k^i\right) - \Delta_k^j(\theta) \right)^2$
                \FOR{$\theta \in \Theta_{grid}$}
			\STATE Imputed gradient: $\widetilde{g}^j(\theta) \gets \sum_{k=1}^{K} p_k \widetilde{g}_k^j(\theta) = \sum_{k=1}^{K} p_k \frac{1}{N_k} \sum_{i=1}^{N_k} \widetilde{X}_k^{(i,j)} \left(\mu_\theta(\widetilde{X}_k^i) - f(\widetilde{X}_k^i)\right), \mu_\theta(x) = \frac{1}{1 + \exp(-x^\top \theta)}$.
                \STATE Empirical variance of prediction at client $k$:\\
                $\left({\widehat{\sigma}}_{g_k}^j\left(\theta\right)\right)^2 \gets \frac{1}{N_k} \sum_{i=1}^{N_k} \left(\widetilde{X}_k^{(i,j)} (\mu_\theta(\widetilde{X}_k^i) - f(\widetilde{X}_k^i)) - \widetilde{g}_k^j(\theta)\right)^2$
			\STATE Aggregate variances from all client:\\$\left({\widehat{\sigma}}_g^j\left(\theta\right)\right)^2 \gets \sum_{k=1}^{K} p_k \Big(\left({\widehat{\sigma}}_{g_k}^j\left(\theta\right)\right)^2 + \big(\widetilde{g}_k^j(\theta) - \widetilde{g}^j(\theta)\big)^2\Big)$.\\
            $\left({\widehat{\sigma}}_\Delta^j (\theta)\right)^2 \gets \sum_{k=1}^{K} p_k \Big(\big(\sigma_{\Delta_k}^j\big)^2 + \big(\Delta_k^j(\theta)  - \Delta^j\big)^2\Big)$
                \STATE $w_\alpha^j(\theta) \gets z_{1-\alpha/(2d)} \sqrt{\frac{\left({\widehat{\sigma}}_g^j\left(\theta\right)\right)^2}{N} + \frac{\left({\widehat{\sigma}}_\Delta^j (\theta)\right)^2}{n}}$
                \ENDFOR
		\end{algorithmic}
            \textbf{Output}: FL-prediction-powered confidence set $ \mathcal{C}_\alpha^{PP} = \left\{ \theta \in \Theta_{grid} : \left| {\widetilde{g}}^j(\theta) + {\widehat{\Delta}}^j(\theta) \right| \le w_\alpha^j(\theta), \forall j \in [d] \right\} $
	\end{algorithm}
\end{figure}
\begin{proposition}[Logistic regression]
\label{logistic-prop}
    Let $\theta^*$ be the logistic regression solution \eqref{logistic-goal}. Then, the prediction-powered confidence interval in Algorithm \ref{FL-logistic-alg} has valid coverage:
$$
    \liminf_{n,N\to\infty} P\left(\theta^* \in \mathcal{C}_\alpha^{PP}\right) \geq 1 - \alpha.
$$
\end{proposition}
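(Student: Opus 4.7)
The plan is to reduce Proposition \ref{logistic-prop} to the asymptotic convex estimation result of Theorem \ref{asymptotic}, by verifying its hypotheses for logistic regression and then unpacking the coordinate-wise CLT together with a Bonferroni adjustment across the $d$ coordinates. The first step is to check that the logistic loss satisfies the nondegeneracy condition \eqref{convex-solution}: since $\ell_\theta(x,y) = -y\theta^\top x + \log(1+\exp(\theta^\top x))$ is convex and differentiable in $\theta$, its subgradient reduces to $g_\theta(x,y) = x(\mu_\theta(x) - y)$, and the usual first-order optimality condition $\mathbb{E}[g_{\theta^*}(\bar X,\bar Y)] = 0$ holds coordinate-wise. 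I would then use the algebraic identity $g_\theta(x,y) = x(\mu_\theta(x) - f(x)) + x(f(x) - y)$, which matches exactly the two integrands defining $\widetilde g(\theta)$ in \eqref{practical-prediction} and $\widehat\Delta(\theta)$ in \eqref{practical-rectifier} for the choices $\widetilde g(\theta) = \mathbb{E}_{k,i}[\widetilde X_k^{(i,j)}(\mu_\theta(\widetilde X_k^i) - f(\widetilde X_k^i))]$ and $\widehat\Delta(\theta) = \mathbb{E}_{k,i}[X_k^{(i,j)}(f(X_k^i) - Y_k^i)]$ stated in the text. Combined with the identification $\mathbb{E}_{k,i} = \mathbb{E}_{\bigcup}$ proved in Appendix \ref{appendix1}, this yields $\mathbb{E}[\widetilde g^j(\theta^*) + \widehat\Delta^j(\theta^*)] = 0$ for every $j \in [d]$.

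Next I would argue coordinate-wise asymptotic normality. Because the labeled and unlabeled samples are independent and pooled across clients through the weights $p_k$, the estimators $\widetilde g^j(\theta^*)$ and $\widehat\Delta^j(\theta^*)$ are independent; applying the CLT gives
$$
\frac{\widetilde g^j(\theta^*) + \widehat\Delta^j(\theta^*)}{\sqrt{(\sigma_g^j(\theta^*))^2/N + (\sigma_\Delta^j(\theta^*))^2/n}} \xrightarrow{d} \mathcal{N}(0,1).
$$
I would then invoke Slutsky together with the fact that the aggregated variance estimators $(\widehat\sigma_g^j)^2$ and $(\widehat\sigma_\Delta^j)^2$ in Algorithm \ref{FL-logistic-alg} are consistent for their pooled counterparts. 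Consistency here follows from the total-variance decomposition $\mathrm{Var}(Z) = \mathbb{E}_k[\mathrm{Var}_i(Z\mid k)] + \mathrm{Var}_k(\mathbb{E}_i[Z\mid k])$ used in the aggregation step \eqref{var-agg}, which is precisely the identity that makes the weighted within-client variances plus between-client variances unbiased for the global variance. The choice of critical value $z_{1-\alpha/(2d)}$ in $w_\alpha^j(\theta^*)$ ensures that each event $\{|\widetilde g^j(\theta^*) + \widehat\Delta^j(\theta^*)| \leq w_\alpha^j(\theta^*)\}$ asymptotically has probability at least $1 - \alpha/d$, and a Bonferroni union bound over $j = 1,\dots,d$ yields $\liminf_{n,N\to\infty} P(\theta^* \in \mathcal{C}_\alpha^{PP}) \geq 1 - \alpha$, exactly as claimed.

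The main obstacle is not the CLT itself but the justification that the federated aggregation behaves like a single i.i.d.\ sample asymptotically: the variance-aggregation formula \eqref{var-agg} is the total-variance decomposition, but turning this into consistency of $(\widehat\sigma_g^j)^2$ and $(\widehat\sigma_\Delta^j)^2$ for the pooled variances needs essentially the same pooling argument as Appendix \ref{appendix1}, lifted from first to second moments, together with regularity of $\mu_\theta$ at $\theta^*$ so that the variance is finite and continuous. Once this is in place, the remaining arithmetic (independence of numerator terms, Slutsky, Bonferroni) parallels the mean and quantile cases and requires no new ideas.
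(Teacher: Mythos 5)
Your proposal is correct and follows essentially the same route as the paper: the paper's proof simply plugs the logistic gradient $g_\theta(x,y) = -yx + x\mu_\theta(x)$ into the definitions of $\widetilde g$ and $\widehat\Delta$, observes that the resulting confidence set coincides with the one built in Algorithm \ref{FL-logistic-alg}, and cites Theorem \ref{asymptotic}. The extra material you include (CLT, Slutsky, variance consistency, Bonferroni over coordinates) is a re-derivation of what Theorem \ref{asymptotic} already supplies, so it adds detail but not a different argument.
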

\subsubsection{Linear regression}
Finally, we consider inference for linear regression:
\begin{equation}
\label{linear-goal}
    \theta^* = \arg\min_{\theta \in \mathbb{R}^d} \mathbb{E}[\ell_\theta(\widebar{X}_k^i, \widebar{Y}_k^i)] = \arg\min_{\theta \in \mathbb{R}^d} \mathbb{E}[\frac{1}{2}(\widebar{Y}_k^i - (\widebar{X}_k^i)^\top \theta)^2].
\end{equation}
The linear loss is differentiable and hence the optimality condition \eqref{convex-solution} is ensured. Its gradient is equal to $g_{\theta}(\widebar{X}_k^i, \widebar{Y}_k^i) = (\widebar{X}_k^i)^+(\widebar{X}_k^i\theta-\widebar{Y}_k^i)$, where $(\widebar{X}_k^i)^+$ is the pseudo-inverse matrix of $\widebar{X}_k^i$. Applying this in the definition of the prediction \eqref{practical-prediction} and rectifier \eqref{practical-rectifier}, we obtain $ \widetilde{g}(\theta) = \theta-\mathbb{E}_{k,i}\left[(\widetilde{X}_k^i)^+ f(\widetilde{X}_k^i)\right]$ and $\widehat{\Delta}=\mathbb{E}_{k,i}\left[(X_k^i)^+\left(f(X_k^i) - Y_k^i\right)\right]$. It is evident that $\widehat{\Delta}$ does not depend on the value of $\theta$. Consequently, we develop the linear regression algorithm employing the same strategy as that used for mean estimation. In Algorithm \ref{FL-linear-alg} we state a method for FL-prediction-powered linear regression and in Proposition \ref{linear-prop} we provide its guarantee.
\begin{figure}[!t]
	\begin{algorithm}[H]
		\caption{FL-prediction-powered linear regression estimation}
		\label{FL-linear-alg}
		\textbf{Input}: Labeled data $(X_k^i, Y_k^i)$, unlabeled features $\widetilde{X}_k^i$, datasize $\{N_k,N,n_k,n\}$, predictor $f$, coefficient $j^* \in [d]$
, error level $\alpha \in (0,1)$.
		
		\begin{algorithmic}[1] 
			\STATE Prediction-powered estimator: ${\widetilde{\theta}}^f \gets \sum_{k=1}^{K} p_k {\widetilde{\theta}}_k^f  $\\ $ = \sum_{k=1}^{K} p_k \frac{1}{N_k} \sum_{i=1}^{N_k}({\widetilde{X}}_k^i)^+ f({\widetilde{X}}_k^i)$.
			\STATE Empirical rectifier: $\widehat{\Delta}(\theta) \gets \sum_{k=1}^{K} p_k \widehat{\Delta}_k(\theta) $\\ $= \sum_{k=1}^{K} p_k \frac{1}{n_k}  \sum_{i=1}^{n_k} (X_k^i)^+ \left( f(X_k^i) - Y_k^i \right)$.
			\STATE Prediction-powered estimator: ${\widehat{\theta}}^{PP} \gets {\widetilde{\theta}}^f - \widehat{\Delta}$.
                \STATE “Sandwich” variance estimator for prediction:\\$\widetilde{\Sigma} \gets \sum_{k=1}^{K} p_k\frac{1}{N_k} \sum_{i=1}^{N_k} ( \widetilde{X}_k^i)^\top \widetilde{X}_k^i$\\
                $\widetilde{M}_k \gets \frac{1}{N_k} \sum_{i=1}^{N_k} \left( f(\widetilde{X}_k^i) - (\widetilde{X}_k^i)^\top \widetilde{\theta}_k^f \right)^2 \widetilde{X}_k^i (\widetilde{X}_k^i)^\top$\\
                $\widetilde{M} \gets \sum_{k=1}^{K} p_k \big( \widetilde{M}_k + ( \widetilde{\theta}_k^f - \widetilde{\theta}^f )^2 \big)$\\
                $\widetilde{V} \gets ( \widetilde{\Sigma} )^{-1} \widetilde{M} ( \widetilde{\Sigma} )^{-1}$
                \STATE “Sandwich” variance estimator for rectifier:\\$\Sigma \gets \sum_{k=1}^{K} p_k\frac{1}{n_k} \sum_{i=1}^{n_k} (X_k^i)^\top X_k^i$\\
                $M_k \gets \frac{1}{n_k} \sum_{i=1}^{n_k} \big( f(X_k^i) - Y_k^i - (X_k^i)^\top \widehat{\Delta}_k(\theta) \big)^2 X_k^i (X_k^i)^\top$\\
                $M \gets \sum_{k=1}^{K} p_k \big( M_k + ( \widehat{\Delta}_k(\theta) - \widehat{\Delta}(\theta) )^2 \big)$\\
                $V \gets ( \Sigma )^{-1} M ( \Sigma )^{-1}$
                \STATE $w_\alpha \gets z_{1-\alpha/2} \sqrt{\frac{\widetilde{V}_{j^\star j^\star}}{N} + \frac{V_{j^\star j^\star}}{n}}$
		\end{algorithmic}
            \textbf{Output}: FL-prediction-powered confidence set $ \mathcal{C}_\alpha^{PP} = \left( \widehat{\theta}_{j^\star}^{PP} \pm w_\alpha \right) $
	\end{algorithm}
\end{figure}
\begin{proposition}[Linear regression]
\label{linear-prop}
    Let $\theta^*$ be the linear regression solution \eqref{linear-goal} and fix $j^* \in [d]$. Then, the prediction-powered confidence interval in Algorithm \ref{FL-linear-alg} has valid coverage:
$$
    \lim \inf_{n,N \to \infty} P\left(\theta_{j^*}^* \in \mathcal{C}_\alpha^{PP}\right) \geq 1 - \alpha.
$$
\end{proposition}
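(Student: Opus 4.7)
The plan is to mimic the classical proof of asymptotic validity for the OLS sandwich interval, while accounting for (i) the plug-in imputation bias that is corrected by $\widehat{\Delta}$, (ii) the federated aggregation through the weights $p_k$, and (iii) the independence between labeled and unlabeled samples. First I would introduce the population quantities $\Sigma = \mathbb{E}_{k,i}[\widebar{X}_k^i (\widebar{X}_k^i)^\top]$, $\theta^f = \Sigma^{-1}\mathbb{E}_{k,i}[\widebar{X}_k^i f(\widebar{X}_k^i)]$, and $\Delta = \Sigma^{-1}\mathbb{E}_{k,i}[\widebar{X}_k^i(f(\widebar{X}_k^i)-\widebar{Y}_k^i)]$. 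The key algebraic observation is the identity $\theta^\ast = \theta^f - \Delta$, which follows from the normal equations for $\theta^\ast$. This yields the decomposition $\widehat{\theta}^{PP} - \theta^\ast = (\widetilde{\theta}^f - \theta^f) - (\widehat{\Delta} - \Delta)$, in which the two summands are independent because they are computed on disjoint labeled/unlabeled subsamples.

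Second, I would apply the standard $M$-estimator expansion---equivalently, the delta method applied to $\widetilde{\theta}^f = (\widetilde X^\top \widetilde X)^{-1} \widetilde X^\top f(\widetilde X)$---together with the multivariate CLT, obtaining $\sqrt{N}(\widetilde{\theta}^f - \theta^f) \Rightarrow \mathcal{N}(0,\widetilde V_\infty)$ and $\sqrt{n}(\widehat{\Delta} - \Delta) \Rightarrow \mathcal{N}(0, V_\infty)$ with the usual OLS sandwich covariances $\widetilde V_\infty = \Sigma^{-1} \widetilde M_\infty \Sigma^{-1}$ and $V_\infty = \Sigma^{-1} M_\infty \Sigma^{-1}$, where $\widetilde M_\infty$ and $M_\infty$ are the expected squared residuals weighted by $\widebar{X}_k^i(\widebar{X}_k^i)^\top$. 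The CLT applies to the FL-aggregated sums because, as already verified for Eq.~\eqref{original-prediction} in Appendix~\ref{appendix1}, the weighted aggregation $\mathbb{E}_{k,i}$ agrees with the pooled expectation $\mathbb{E}_{\bigcup}$ in the population.

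Third, I would show that the empirical sandwich pieces in Algorithm~\ref{FL-linear-alg} are consistent: $\widetilde\Sigma,\Sigma \xrightarrow{p} \Sigma$ and $\widetilde M,M \xrightarrow{p} \widetilde M_\infty, M_\infty$. The only step that is specific to the federated construction is verifying the aggregation rule $\widetilde M = \sum_{k=1}^K p_k\bigl(\widetilde M_k + (\widetilde{\theta}_k^f - \widetilde{\theta}^f)^{\otimes 2}\bigr)$ (and its analogue for $M$), which is precisely the ANOVA-style law of total variance: the within-client piece $\sum_k p_k \widetilde M_k$ together with the between-client piece $\sum_k p_k (\widetilde{\theta}_k^f - \widetilde{\theta}^f)^{\otimes 2}$ recover the pooled second moment, so under the IID regime of Section~\ref{convex-est} both sides converge to $\widetilde M_\infty$. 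Continuous mapping then gives $\widetilde V \xrightarrow{p} \widetilde V_\infty$ and $V \xrightarrow{p} V_\infty$.

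Finally, since the labeled and unlabeled samples are independent, their variances add, and the $j^\ast$-th coordinate of $\widehat{\theta}^{PP} - \theta^\ast$ is asymptotically Gaussian with variance $\widetilde V_{\infty, j^\ast j^\ast}/N + V_{\infty, j^\ast j^\ast}/n$. Combining this with Slutsky's theorem and the consistency of $\widetilde V_{j^\ast j^\ast}$ and $V_{j^\ast j^\ast}$ gives $\liminf_{n,N\to\infty} P(\theta^\ast_{j^\ast}\in\mathcal{C}_\alpha^{PP}) \geq 1-\alpha$. The main obstacle I anticipate is the third step: rigorously showing that the federated aggregation of the sandwich matrices, together with the between-client correction, consistently recovers the pooled sandwich matrix when the per-client sample sizes $n_k, N_k$ may differ and only $n,N\to\infty$ jointly. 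Under the IID setup of Section~\ref{convex-est} the law of total (co)variance closes the argument cleanly; genuinely non-IID clients would demand stronger moment and balance conditions than are assumed here.
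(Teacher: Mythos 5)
Your proposal is correct in substance, but it takes a genuinely different route from the paper. The paper's proof is a two-line reduction: it plugs the linear-regression gradient $g_\theta(\widebar{X}_k^i,\widebar{Y}_k^i)$ into the generic prediction/rectifier definitions, rewrites the resulting set $\{\theta : |\widetilde{g}(\theta)+\widehat{\Delta}(\theta)|\le w_\alpha(\theta)\}$ as $\widehat{\theta}^{PP}\pm w_\alpha$, observes that this is exactly the output of Algorithm \ref{FL-linear-alg}, and then inherits coverage from Theorem \ref{asymptotic} (exactly as in the mean-estimation case). You instead give a self-contained sandwich-asymptotics argument: the population identity $\theta^\ast=\theta^f-\Delta$ from the normal equations, separate CLTs for $\widetilde{\theta}^f$ and $\widehat{\Delta}$ on the disjoint unlabeled/labeled samples, consistency of the federated sandwich pieces via the within/between (law of total variance) decomposition, and Slutsky to close. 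What your route buys is precisely what the paper's reduction glosses over: Theorem \ref{asymptotic} is stated with coordinate-wise subgradient variances $(\widehat{\sigma}_g^j)^2,(\widehat{\sigma}_\Delta^j)^2$, whereas Algorithm \ref{FL-linear-alg} uses sandwich estimators $\widetilde{V},V$, so your second and third steps supply the justification that the paper's ``this is exactly the set constructed in Algorithm \ref{FL-linear-alg}'' elides. The costs are that you need extra regularity (invertibility of $\Sigma$, moment conditions, IID clients, and some control of $n/N$ when adding the two limits, which is cleaner if done through the studentized ratio as in the paper's proof of Theorem \ref{asymptotic}), and that you silently reinterpret $\widetilde{\theta}^f$ and $\widehat{\Delta}$ as OLS-type solves $(\widetilde{X}^\top\widetilde{X})^{-1}\widetilde{X}^\top f(\widetilde{X})$ rather than the per-sample pseudo-inverse averages literally written in Algorithm \ref{FL-linear-alg}; that reading is the one consistent with the sandwich variance the algorithm computes, but you should state it explicitly as an assumption on what the algorithm means.
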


\section{Performance Analysis}
\label{sec-experiment}
To evaluate the proposed algorithm, the experiments focused on the qualitative and quantitative analysis of general properties under the setup of our prototype system and the simulation of IID and Non-IID datasets. 
\subsection{Real tasks}
The dataset and statistical target $\theta^*$ for the real task are described in detail in \cite{angelopoulos2023prediction}. In the following, we will introduce the FL-PPI algorithm for the key parts.

\subsubsection{Galaxy classification}
The goal is to determine the demographics of galaxies with spiral arms, which are correlated with star formation in the discs of low-redshift galaxies and therefore contribute to the understanding of star formation in the Local Universe. Our focus is on estimating the fraction of galaxies with spiral arms. We then use the Algorithm \ref{FL-mean-alg} for the FL-prediction-powered mean estimation to construct intervals.

\subsubsection{Estimating deforestation in the Amazon}
The goal is to estimate the fraction of the Amazon rainforest lost between 2000 and 2015, using the Algorithm \ref{FL-mean-alg} to construct the FL-prediction-powered intervals.

\subsubsection{Relating protein structure and post-translational modifications}
The goal is to characterize whether various types of post-translational modifications (PTMs) occur more frequently in intrinsically disordered regions (IDRs) of proteins \cite{iakoucheva2004importance} by using structures predicted by AlphaFold \cite{jumper2021highly}.

We use the fact that the odds ratio, between whether or not a protein residue is part of an IDR, and whether or not it has a PTM, can be expressed as a function of two means:
$$
\theta^* = \frac{\mu_1 / (1 - \mu_1)}{\mu_0 / (1 - \mu_0)}
$$
Since Algorithm \ref{FL-mean-alg} can provide FL-prediction-powered confidence intervals 
$\mathcal{C}_0^{PP} = [l_0, u_0] $ and $\mathcal{C}_1^{PP} = [l_1, u_1]$ for the two means, $\mu_1$ and $\mu_0$, we can obtain the following confidence interval for the odds ratio function.
$$
\mathcal{C}_\alpha^{PP} = \left( \frac{l_1}{1 - l_1} \cdot \frac{1 - u_0}{u_0}, \frac{u_1}{1 - u_1} \cdot \frac{1 - l_0}{l_0} \right)
$$

\subsubsection{Distribution of gene expression levels}
The goal is to characterize how a population of promoter sequences affects gene expression, focusing on estimating the 0.5-quantiles of gene expression levels induced by native yeast promoters. We construct FL-prediction-powered confidence intervals on quantiles, specifically using the Algorithm \ref{FL-quantile-alg} where $q = 0.5$. 

\subsubsection{Relationship between income and private health insurance}
The goal is to investigate the quantitative effect of income on the procurement of private health insurance using US census data in 2019. We use a gradient-boosted tree \cite{chen2016scalable} trained on the previous year’s data to predict the health insurance indicator. We construct a FL-prediction-powered confidence interval on the logistic regression coefficient using the Algorithm \ref{FL-logistic-alg}.

\subsubsection{Relationship between age and income in a covariate-shifted population}
The goal is to investigate the relationship between age and income using US census data. We use the same dataset as in the previous experiment, but the features are age and sex, and the target is yearly income in dollars. We used a gradient-boosted tree \cite{chen2016scalable} trained on the previous year’s raw data to predict the income. We construct a prediction-powered confidence interval on the ordinary least squares regression coefficient using the Algorithm \ref{FL-linear-alg}.

\subsection{Setup}
\label{subsec-set}
To assess the overall performance of the proposed algorithm, we first conducted experiments using a networked prototype system with clients. We then recorded and analyzed the variations in the $\mathcal{C}_\alpha^{PP}$ metric of the proposed algorithm.

\subsubsection{Simulation of Dataset Distribution}
\label{sec-distribute}
We have a total dataset $\left(\widebar{X}_{k}^{i}, \widebar{Y}_{k}^{i},f({\widebar{X}}_k^i)\right)$, and set up two different Non-IID cases and a standard IID case to simulate the dataset distribution.
\begin{itemize}
	\item \textbf{Case 1} (IID): The samples from the total dataset are randomly and uniformly distributed to the individual clients.
	\item \textbf{Case 2} (Non-IID): The total dataset is sorted by the value of $f({\widebar{X}}_k^i)$ and then evenly distributed among the clients in that order.
	\item \textbf{Case 3} (Non-IID): The first half of the total dataset is randomly shuffled, while the second half is sorted based on the value of $f({\widebar{X}}_k^i)$. The samples are then evenly distributed among the clients.
\end{itemize}
\subsubsection{Control Parameters}
\label{control}
At the beginning of our experiments, we configured the prototype system with 5 nodes, where each node has an equally sized dataset, resulting in an total dataset partition of [1:1:1:1:1]. The proportion of labeled samples to the total number of samples in each dataset is $\lambda=0.1$.

In the subsequent experiments, we conducted ablation studies on three control parameters: the proportion of labeled samples, the total dataset partition, and the number of clients. The experimental results are presented in Sections \ref{sec-proportion}, \ref{sec-partition}, and \ref{sec-client}, respectively.

\subsection{Results}
\label{sec-result}
We first conduct experiments under the initial settings described in Section \ref{control}, and then analyze the control parameters separately: the proportion of labeled samples, the total dataset partition, and the number of clients.
\subsubsection{Prediction-powered confidence interval under \textbf{Case 1-3} with initial settings}
\label{sec-CPPs}
\begin{figure*}[!t]
	\centering
	\includegraphics[width=0.8\textwidth]{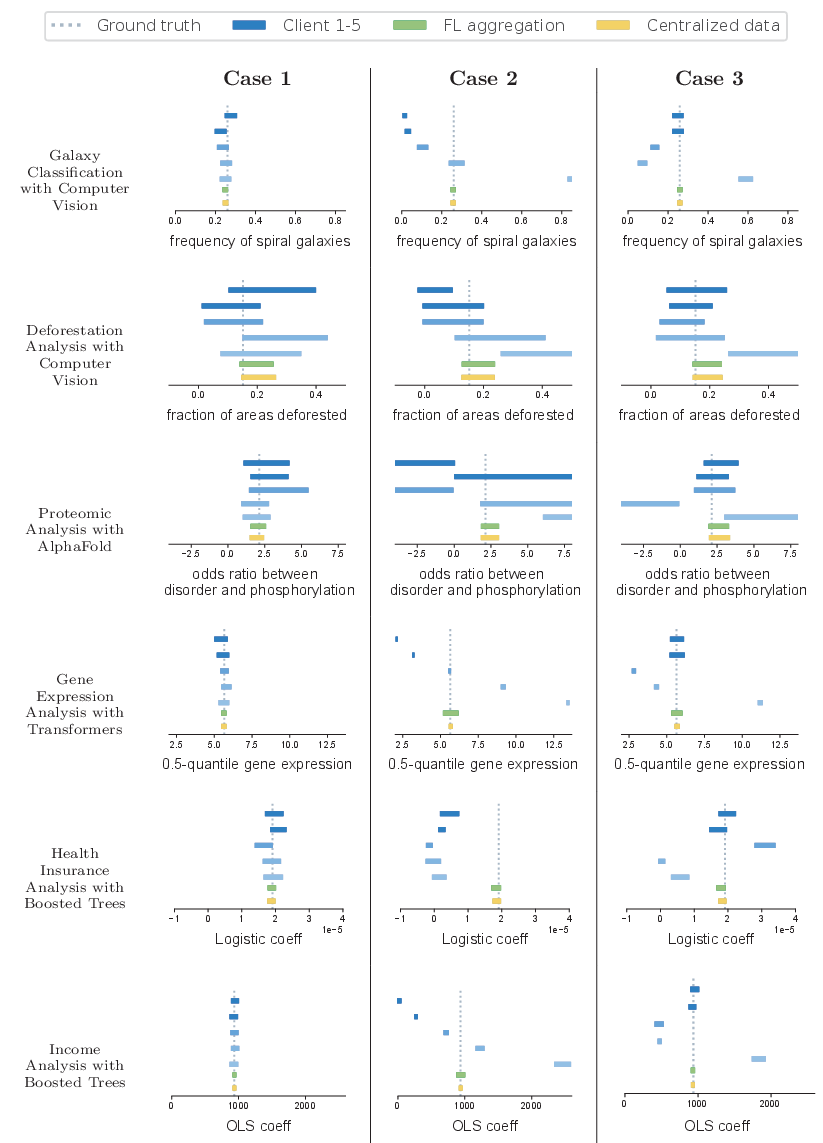}
	\caption{\textbf{Comparison of prediction-powered confidence interval at Client 1-5, FL aggregation and Centralized data.} Each row is a different application. Column 1 provides an introduction to the application, while columns 2-4 present \textbf{Case 1-3} as outlined in Section \ref{subsec-set}. In each figure, the prediction-powered confidence intervals at clients 1-5 are represented by blue gradient bars, with lighter shades indicating higher confidence levels.}
        \label{fig-interval-c1-c3}
\end{figure*}
In the initial experiment, we conducted real tasks in \textbf{Case 1-3}, and recorded the prediction-powered confidence intervals for both each client, federated aggregation and centralized data. It is important to clarify that federated aggregation does not transmit individual node dataset information (see Algorithms \ref{FL-mean-alg}-\ref{FL-linear-alg}), while centralized data directly computes the $\mathcal{C}_\alpha^{PP}$ for the entire dataset $\bigcup (\widebar{X}_k^i, \widebar{Y}_k^i)$. The results are shown in Figure \ref{fig-interval-c1-c3}, where the ground truth is directly calculated from total dataset as $\mathbb{E} [\widebar{Y}_k^i]$. Moreover, the prediction-powered confidence intervals ($\mathcal{C}_\alpha^{PP}$) for each client are depicted as gradient blue bars, the federated aggregation $\mathcal{C}_\alpha^{PP}$ is shown as a green bar, and the centralized data $\mathcal{C}_\alpha^{PP}$ is represented by a yellow bar.

From \textbf{Case 1} (IID dataset) in Figure \ref{fig-interval-c1-c3}, we can observe that the federated aggregation $\mathcal{C}_\alpha^{PP}$ for each real task successfully covers the ground truth. Moreover, it is narrower and closer to the centralized data $\mathcal{C}_\alpha^{PP}$ compared to the individual clients. These experimental results are consistent with our analysis of Eq. \eqref{eq-mean-example}.

In \textbf{Case 2} and \textbf{Case 3} (Non-IID dataset) shown in Figure \ref{fig-interval-c1-c3}, some clients' $\mathcal{C}_\alpha^{PP}$ intervals fail to cover the ground truth due to differences in sample feature distributions between local datasets and the total dataset. However, our FL-PPI algorithm still produces $\mathcal{C}_\alpha^{PP}$ intervals similar to those of the centralized data, demonstrating that FL-PPI can accurately represent the total dataset in mean estimation. For quantile estimation, in the fourth task, the significant differences in rectifiers (see Eq.\eqref{var-agg}) across clients cause the FL-PPI algorithm to produce a wider $\mathcal{C}_\alpha^{PP}$ interval (still covers the ground truth). For the logistic regression estimation, corresponding to the fourth real task, we observed that the $\mathcal{C}_\alpha^{PP}$ values on each client are skewed towards the ground truth in \textbf{Case 2}. This behavior is attributed to the fact that the logistic regression loss function is influenced not only by the distribution of $f({\widebar{X}}_k^i)$ but also by the parameter $\mu_\theta$.

\subsubsection{Impact of labeled sample proportion}
\label{sec-proportion}

\renewcommand{\arraystretch}{1.5}
\begin{table*}[!t]
	\centering
	\caption{In \textbf{Case 1}, the prediction-powered confidence interval $\mathcal{C}_\alpha^{PP}$ under different proportions of labeled data.}
	\label{tab-proportion}
	\centering
	\begin{tabular}{cccccccc}
		\hline
		Problem & Ground truth $\theta^*$ & Strategy & $\lambda=0.01$ & $\lambda=0.3$ & $\lambda=0.5$  & $\lambda=0.7$ & $\lambda=0.99$\\
		\hline
		~ \multirow{2}{*}{Galaxy classification}  &  \multirow{2}{*}{0.259} & Centralized & [0.220, 0.305] & [0.247, 0.263] & [0.249, 0.263] & [0.251, 0.263] & \bf{[0.254, 0.265]} \\
		~  &   & \bf{FL} & [0.217, 0.300] & [0.247, 0.263] & [0.249, 0.263] & \bf{[0.251, 0.263]} & [0.251, 0.268] \\
		\hline
		~ \multirow{2}{*}{Deforestation analysis}  &  \multirow{2}{*}{0.152} & Centralized & [0.091, 0.513] & [0.142, 0.205] & [0.142, 0.191] & [0.137, 0.178] & \bf{[0.135, 0.171]} \\
            ~  &   & \bf{FL} & [0.078, 0.475] & [0.145, 0.207] & [0.143, 0.191] & \bf{[0.137, 0.178]} & [0.132, 0.174] \\
		\hline
		~ \multirow{2}{*}{Proteomic analysis}  &  \multirow{2}{*}{2.131} & Centralized & [1.143, 5.660] & [1.803, 2.532] & [1.860, 0.2.488] & [1.846, 2.411] & \bf{[1.885, 2.419]} \\
            ~  &   & \bf{FL} & [1.242, 5.901] & [1.804, 2.535] & [1.856, 2.483] & \bf{[1.847, 2.414]} & [1.411, 3.244] \\
            \hline
		~ \multirow{2}{*}{Gene expression}  &  \multirow{2}{*}{5.650} & Centralized & [4.920, 5.817] & [5.513, 5.749] & [5.443, 5.668] & \bf{[5.469, 5.717]} & [5.522, 6.481] \\
            ~  &   & \bf{FL} & [4.909, 5.860] & [5.511, 5.751] & [5.441, 5.662] & \bf{[5.468, 5.716]} & [5.270, 6.531] \\
            \hline
		~ \multirow{2}{*}{Health insurance}  &  \multirow{2}{*}{1.913($10^{-5}$)} & Centralized & [1.599, 2.337] & [1.837, 1.980] & [1.847, 1.962] & [1.841, 1.941] & \bf{[1.870, 1.957]} \\
            ~  &   & \bf{FL} & [1.685, 2.480] & [1.838, 1.980] & [1.848, 1.963] & [1.841, 1.941] & \bf{[1.871, 1.958]} \\
            \hline
		~ \multirow{2}{*}{Income analysis}  &  \multirow{2}{*}{0.938($10^{3}$)} & Centralized & [0.853, 1.033] & [0.919, 0.951] & [0.923, 0.948] & [0.927, 0.949] & \bf{[0.930, 0.949]} \\
            ~  &   & \bf{FL} & [0.854, 1.033] & [0.919, 0.951] & [0.923, 0.948] & [0.927, 0.949] & \bf{[0.930, 0.949]} \\
		\hline
	\end{tabular}
\end{table*}
To further investigate the impact of the proportion $\lambda$ of labeled samples to the total sample size in each local dataset, we configured two extreme cases with $\lambda = [0.01, 0.99]$ and three standard cases with $\lambda = [0.3, 0.5, 0.7]$ under the scenario of \textbf{Case 1} (other control parameters fixed). The experimental results are presented in Table \ref{tab-proportion}.

From Table \ref{tab-proportion}, we can observe that as $\lambda$ increases from 0.01 to 0.7, the $\mathcal{C}_\alpha^{PP}$ narrows accordingly. To understand this phenomenon, we need to analyze Eq. \eqref{eq-mean-example}: as $n$ increases and $N$ decreases, the value of $\frac{\left(\widehat{\sigma}^{f-Y}\right)^2}{n}$ decreases while the value of $\frac{\left(\widehat{\sigma}^f\right)^2}{N}$ increases. Since the decrease in $\frac{\left(\widehat{\sigma}^{f-Y}\right)^2}{n}$ is greater than the increase in $\frac{\left(\widehat{\sigma}^f\right)^2}{N}$, the overall $w_\alpha$ value decreases, leading to a narrower $\mathcal{C}_\alpha^{PP}$. When $\lambda$ increases from 0.7 to 0.99, the changes in $\frac{\left(\widehat{\sigma}^{f-Y}\right)^2}{n}$ and $\frac{\left(\widehat{\sigma}^f\right)^2}{N}$ become more random, resulting in a $\mathcal{C}_\alpha^{PP}$ that can either narrow or widen unpredictably.

From Table \ref{tab-proportion} and Figure \ref{fig-interval-c1-c3}, we can see that when $\lambda = 0.1$, the $\mathcal{C}_\alpha^{PP}$ is already sufficiently narrow. This indicates that our FL-PPI algorithm requires only a small amount of labeled data to achieve statistically significant confidence intervals.

\subsubsection{Different total dataset partitions}
\label{sec-partition}

\begin{figure}[!t]
	\centering
	\includegraphics[width=3.5in]{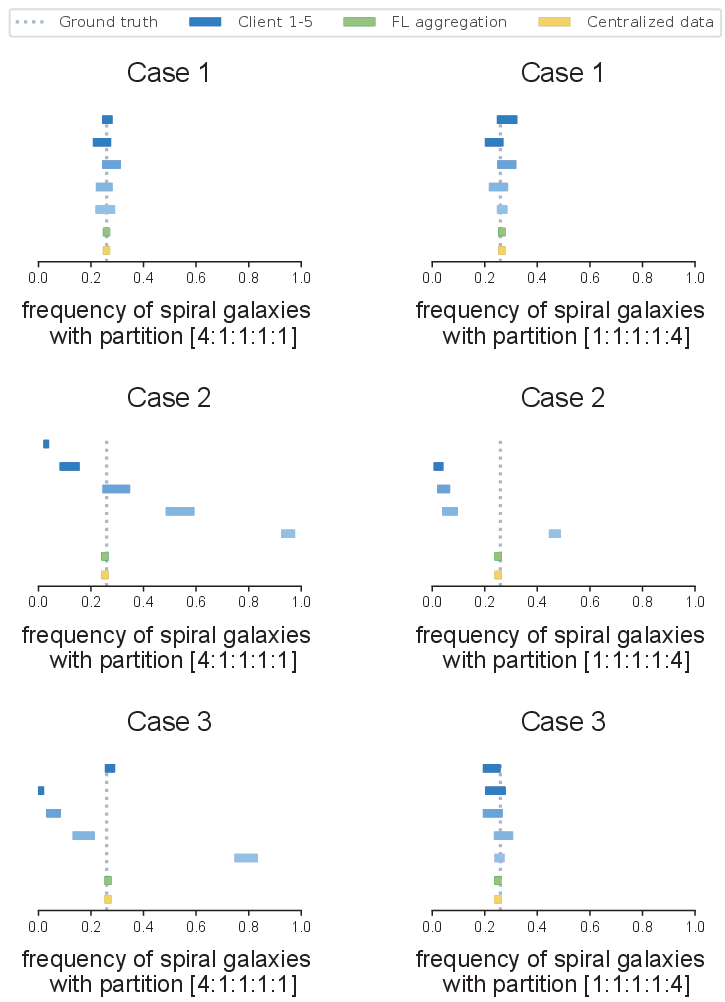}
	\caption{\textbf{Prediction-powered confidence intervals with different partition.} The rows represent scenarios from \textbf{Case 1} to \textbf{Case 3}, and the columns represent two different total dataset partition: [4:1:1:1:1] and [1:1:1:1:4].}
        \label{fig-partition}
\end{figure}
To observe the impact of varying sample sizes across 5 clients (Client 1-5) on the prediction-powered confidence intervals, we configured two different total dataset partitioning methods: [4:1:1:1:1] and [1:1:1:1:4] (other control parameters fixed). In partition [4:1:1:1:1], the first client holds the first half of the total dataset, while the remaining clients equally share the rest. In partition [1:1:1:1:4], the last client holds the second half of the total dataset, with the remaining clients equally sharing the rest. We conducted the `Galaxy Classification with Computer Vision' task in the scenarios of \textbf{Case 1-3}, the experimental results are presented in Figure \ref{fig-partition}.

From Figure \ref{fig-partition}, we can observe that in \textbf{Case 1-3}, an increase in the sample size on a client leads to a narrowing of its local $\mathcal{C}_\alpha^{PP}$, consistent with our analysis in Section \ref{example}. Furthermore, it is noted that in \textbf{Case 2}, with partition [1:1:1:1:4], the confidence interval for Client 1 is nearly [0, 0] and does not appear, as the small size of the data increases the likelihood that local data samples have $Y_1^i=0$ for all $i$ (thus $\widehat{\theta}^{PP}=0$, $\left(\widehat{\sigma}^f\right)^2=0$, and $\left(\widehat{\sigma}^{f-Y}\right)^2=0$). Lastly, neither of these partitions significantly affected the $\mathcal{C}_\alpha^{PP}$ of the FL-PPI algorithm, demonstrating the robustness of the algorithm.

\subsubsection{Varying number of clients}
\label{sec-client}

\begin{figure}[!t]
	\centering
	\includegraphics[width=3.5in]{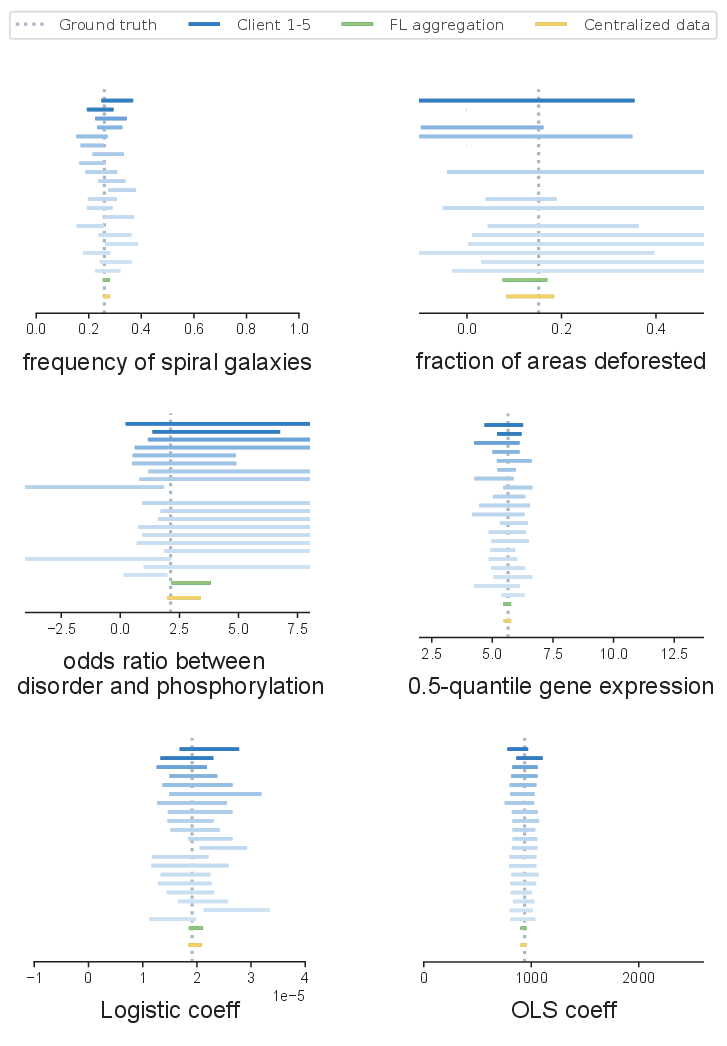}
	\caption{\textbf{Prediction-powered confidence intervals with 20 clients in Case 1.} Each subplot corresponds to a real task.}
        \label{fig-20clients}
\end{figure}
Keeping other parameters at their initial values, we expanded the number of clients to 20 to observe its impact on all real tasks under \textbf{Case 1}. The experimental results are shown in Figure \ref{fig-20clients}. The increase in the number of clients resulted in a reduced sample size per client, leading to a widening of the local CPP. In task `Deforestation Analysis with Computer Vision' and `Proteomic Analysis with AlphaFold', this even caused the $\mathcal{C}_\alpha^{PP}$ to shrink to [0, 0], making it disappear from the display, which is consistent with the observations discussed in Section \ref{sec-partition}. However, the increase in the number of clients had little impact on our FL-PPI algorithm. Its $\mathcal{C}_\alpha^{PP}$ remained the same width as that of the centralized data (cover the true value).

\section{Conclusion and The Future Work}
\label{sec-conclusion}
To address the challenge of `data silos' in Prediction-Powered Inference (PPI), this paper proposes the Federated Prediction-Powered Inference (Fed-PPI) framework. Fed-PPI enables decentralized experimental data to contribute to statistically valid conclusions without sharing private information. We introduced algorithms for common statistical problems within this framework and provided a theoretical analysis of their performance. Extensive experiments demonstrate the statistical validity of the confidence intervals obtained through Fed-PPI, highlighting its potential to overcome data sharing limitations in real-world scenarios. Future work will focus on optimizing computational efficiency and expanding the theoretical framework to various statistical applications.



\normalem
\bibliographystyle{IEEEtran}
\bibliography{IEEEabrv,myrefs}

\clearpage
\onecolumn

{\appendices
\section{PPI Parameter Estimation: Federated Aggregation vs. Direct Computation}
\label{appendix1}
\begin{proposition}
	Based on our definitions of the two PPI parameter computation methods for $g(\theta)$ in Eq. \eqref{original-prediction} and Eq. \eqref{additional-agg}, we have 
    $$
    \mathbb{E}_{k,i} = \mathbb{E}_{k} \left[\mathbb{E}_{i} \left[g_\theta \left( {\widebar{X}}_k^i, f({\widebar{X}}_k^i) \right) \right]\right] = \mathbb{E} \left[ \bigcup g_\theta (\widebar{X}_k^i, \widebar{Y}_k^i) \right] = \mathbb{E}_{\bigcup},
    $$ 
    thus $\mathbb{E}_{k,i}$ is equivalent to the direct computation of the PPI parameters on the entire dataset $\bigcup (\widebar{X}_k^i, \widebar{Y}_k^i)$.
\end{proposition}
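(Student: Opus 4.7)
The plan is to show directly, by algebraic rearrangement, that the sample-proportional weight scheme $p_k = m_k/\sum_{k'} m_{k'}$ is specifically designed to make the two-stage federated average in Eq.~\eqref{original-prediction} collapse to the single-stage unweighted mean in Eq.~\eqref{additional-agg}, so the proposition reduces to elementary bookkeeping rather than any probabilistic argument.

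First, I would substitute the definition $p_k = m_k/M$ from Eq.~\eqref{original-weight} (with $M := \sum_{k=1}^{K} m_k$) into the right-hand side of Eq.~\eqref{original-prediction}. The factor $m_k$ in the numerator of $p_k$ cancels the prefactor $1/m_k$ of the inner per-client average, leaving
\begin{equation*}
\mathbb{E}_{k,i}\bigl[g_\theta(\widebar{X}_k^i, f(\widebar{X}_k^i))\bigr]
= \sum_{k=1}^{K} \frac{m_k}{M}\cdot\frac{1}{m_k}\sum_{i=1}^{m_k} g_\theta(\widebar{X}_k^i, f(\widebar{X}_k^i))
= \frac{1}{M}\sum_{k=1}^{K}\sum_{i=1}^{m_k} g_\theta(\widebar{X}_k^i, f(\widebar{X}_k^i)).
\end{equation*}

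Second, I would reinterpret the resulting double sum as a single sum indexed by the elements of the disjoint union $\bigcup_{k=1}^K \{(\widebar{X}_k^i, f(\widebar{X}_k^i))\}_{i=1}^{m_k}$. Since this union contains exactly $M$ elements, dividing the total sum by $M$ is, by definition, the empirical mean $\mathbb{E}_{\bigcup}[g_\theta(\widebar{X}_k^i, f(\widebar{X}_k^i))]$ appearing in Eq.~\eqref{additional-agg}. Chaining these two identities produces the required equality $\mathbb{E}_{k,i} = \mathbb{E}_{\bigcup}$.

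The ``hard part'' here is essentially trivial: no probabilistic, distributional, or learning-theoretic assumption is invoked, and the result is a direct algebraic consequence of the sample-proportional weighting scheme. The only subtle point worth flagging is the notational inconsistency in the proposition statement, which writes $f(\widebar{X}_k^i)$ on the $\mathbb{E}_{k,i}$ side but $\widebar{Y}_k^i$ on the $\mathbb{E}_{\bigcup}$ side; the argument applies verbatim provided the same function argument is used on both sides, which matches the intended reading ``for imputed predictions, $\mathbb{E}_{k,i}=\mathbb{E}_{\bigcup}$.'' An identical manipulation establishes the same equivalence for the rectifier in Eq.~\eqref{original-rectifier}, justifying the claim made immediately after that equation as well.
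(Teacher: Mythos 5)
Your proposal is correct and follows essentially the same route as the paper's own proof in Appendix~\ref{appendix1}: substitute $p_k = m_k/\sum_k m_k$, cancel the $1/m_k$ prefactor, and recognize the resulting $\frac{1}{\sum_k m_k}\sum_k\sum_i$ as the empirical mean over the pooled dataset. Your additional remark about the $f(\widebar{X}_k^i)$ versus $\widebar{Y}_k^i$ notational mismatch is a fair observation, but the core argument matches the paper's.
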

\begin{proof}
	In order to proceed to the proof, we first rewrite Eq. \eqref{original-prediction} and Eq. \eqref{additional-agg}
	$$
		\begin{aligned}
			\mathbb{E}_{k,i} &= \sum_{k=1}^{K} p_k \frac{1}{m_k} \sum_{i=1}^{m_k} g_\theta \left( {\widebar{X}}_k^i, f({\widebar{X}}_k^i) \right) = \sum_{k=1}^{K} \frac{m_k}{\sum_{k=1}^{K} m_k} \frac{1}{m_k} \sum_{i=1}^{m_k} g_\theta \left( {\widebar{X}}_k^i, f({\widebar{X}}_k^i) \right) = \frac{1}{\sum_{k=1}^{K} m_k} \sum_{k=1}^{K} \sum_{i=1}^{m_k} g_\theta \left( {\widebar{X}}_k^i, f({\widebar{X}}_k^i) \right) \\
            &= \mathbb{E} \left[ \bigcup g_\theta (\widebar{X}_k^i, \widebar{Y}_k^i) \right] = \mathbb{E}_{\bigcup},
		\end{aligned}
	$$
	where the third term in the equation is due to $p_k := \frac{m_k}{\sum_{k=1}^{K} m_k}$, while the fourth term arises because $\frac{1}{\sum_{k=1}^{K} m_k}$ is a constant and is not influenced by $k$.
 
    That completes the proof.
\end{proof}
\section{Proof of Theorems}
\label{appendix2}
\setcounter{theorem}{0}
\setcounter{proposition}{0}
\setcounter{lemma}{0}

\subsection{Convex estimation}
\begin{theorem}
    Suppose that the convex estimation problem is nondegenerate as in \eqref{convex-solution}. Fix $\alpha \in (0,1)$ and $\Delta(\theta) \in (0,\alpha)$. Suppose that, for any $\theta \in \mathbb{R}^d$, we can construct $\mathcal{T}_{\alpha-\delta}$ and $\mathcal{R}_\delta(\theta)$ satisfying
    \begin{equation}
        \left\{
            \begin{aligned}
                &P\left(g(\theta)\in\mathcal{T}_{\alpha-\delta}(\theta)\right)\geq1-(\alpha-\delta)\\
                &P(\Delta(\theta)\in\mathcal{R}_\delta(\theta))\geq1-\delta
            \end{aligned} 
        \right.
    \end{equation}
    Let $\mathcal{C}_\alpha^{PP}=\{\theta:0\in\mathcal{R}_\delta(\theta)+\mathcal{T}_{\alpha-\delta}(\theta)\}$, where $+$ denotes the Minkowski sum. Then,
    \begin{equation}
        P(\theta^\ast\in\mathcal{C}_\alpha^{PP})\geq1-\alpha
    \end{equation}
\end{theorem}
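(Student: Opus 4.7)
The plan is to combine the nondegeneracy identity $g(\theta^\ast)+\Delta(\theta^\ast)=0$ (which follows from \eqref{convex-solution} together with the definitions of $g(\theta)$ and $\Delta(\theta)$ in \eqref{original-prediction} and \eqref{original-rectifier}) with a simple union bound over the two confidence events, and then transfer the inclusion $0\in\mathcal{T}_{\alpha-\delta}(\theta^\ast)+\mathcal{R}_\delta(\theta^\ast)$ into membership of $\theta^\ast$ in $\mathcal{C}_\alpha^{PP}$ via the definition of the Minkowski sum.

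Concretely, I would first verify the identity $g(\theta^\ast)+\Delta(\theta^\ast)=0$. By the two definitions, their sum telescopes to $\mathbb{E}_{k,i}[g_{\theta^\ast}(\widebar{X}_k^i,\widebar{Y}_k^i)]$, which is exactly zero by \eqref{convex-solution}. Second, I would define the two good events
\[
A := \{g(\theta^\ast)\in\mathcal{T}_{\alpha-\delta}(\theta^\ast)\},\qquad B := \{\Delta(\theta^\ast)\in\mathcal{R}_\delta(\theta^\ast)\},
\]
which by hypothesis satisfy $P(A^c)\le \alpha-\delta$ and $P(B^c)\le \delta$. A union bound then gives $P(A\cap B)\ge 1-(\alpha-\delta)-\delta=1-\alpha$. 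Third, on $A\cap B$, the definition of the Minkowski sum yields
\[
0 \;=\; g(\theta^\ast)+\Delta(\theta^\ast) \;\in\; \mathcal{T}_{\alpha-\delta}(\theta^\ast)+\mathcal{R}_\delta(\theta^\ast),
\]
so by definition of $\mathcal{C}_\alpha^{PP}$ we have $\theta^\ast\in\mathcal{C}_\alpha^{PP}$. Taking probabilities gives $P(\theta^\ast\in\mathcal{C}_\alpha^{PP})\ge P(A\cap B)\ge 1-\alpha$, which is the claim.

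There is no real obstacle here: the whole argument is a two-line union bound plus an algebraic substitution, and the only content is the identity $g(\theta^\ast)+\Delta(\theta^\ast)=0$, which is built into the setup. If a subtlety arises, it will be in handling the nonuniqueness of $\theta^\ast$ mentioned after \eqref{estimand}; but since the argument is pointwise and applies to any $\theta^\ast$ solving \eqref{convex-solution}, the conclusion $\theta^\ast\in\mathcal{C}_\alpha^{PP}$ holds for every such minimizer, which is exactly the ``confidence set containing all minimizers'' guarantee stated informally in the text.
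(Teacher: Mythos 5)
Your proposal is correct and matches the paper's own argument essentially step for step: a union bound on the two confidence events, the telescoping identity $g(\theta^\ast)+\Delta(\theta^\ast)=\mathbb{E}_{k,i}\left[g_{\theta^\ast}(\widebar{X}_k^i,\widebar{Y}_k^i)\right]=0$ from the nondegeneracy condition, and then membership of $0$ in the Minkowski sum to conclude $\theta^\ast\in\mathcal{C}_\alpha^{PP}$. No gaps to report.
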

\begin{proof}
Consider the event $E=\{\Delta(\theta^{*})\in\mathcal{R}_{\delta}(\theta^{*})\}\cap\{g(\theta^{*})\in\mathcal{T}_{\alpha-\delta}(\theta^{*})\}$. By a union bound, $P(E)\geq 1-\alpha$. On the event $E$, we have that
$$
\begin{aligned}
    \mathbb{E}_{k,i}\left[g_{\theta^\ast}({\widebar{X}}_k^i,{\widebar{Y}}_k^i)\right] &= \mathbb{E}_{k,i} \left[  g_{\theta^\ast}({\widebar{X}}_k^i, {\widebar{Y}}_k^i) - g_{\theta^\ast}({\widebar{X}}_k^i, f({\widebar{X}}_k^i)) + g_{\theta^\ast}({\widebar{X}}_k^i, f({\widebar{X}}_k^i)) \right] \\
    &= \mathbb{E}_{k,i} \left[ g_{\theta^\ast}({\widebar{X}}_k^i, {\widebar{Y}}_k^i) - g_{\theta^\ast}({\widebar{X}}_k^i, f({\widebar{X}}_k^i)) \right] + \mathbb{E}_{k,i} \left[ g_{\theta^\ast}({\widebar{X}}_k^i, f({\widebar{X}}_k^i)) \right] \\
    &=\Delta(\theta^{*}) + g(\theta^{*}) \in \mathcal{R}_{\delta}(\theta^{*}) + \mathcal{T}_{\alpha-\delta}(\theta^{*})
\end{aligned}  
$$
Invoking the nondegeneracy condition which ensures $\mathbb{E}_{k,i}\left[g_{\theta^\ast}({\widebar{X}}_k^i,{\widebar{Y}}_k^i)\right] = 0$, thus we have
$$
    P\big(0 \in \mathcal{R}_{\delta}(\theta^{*}) + \mathcal{T}_{\alpha-\delta}(\theta^{*})\big) \geq 1-\alpha
$$
where it shows that $\theta^{*} \in \mathcal{C}_\alpha^{PP}$ with probability at least $1-\alpha$, thus
$$
    P(\theta^\ast\in\mathcal{C}_\alpha^{PP})\geq1-\alpha
$$
That completes the proof.
\end{proof}

\subsection{Convex estimation: asymptotic version}
\begin{theorem}
Suppose that the convex estimation problem is nondegenerate as in \eqref{convex-solution}. Denoting by $g^j(x, y)$ the $j$-th coordinate of $g(x, y)$. Fix $\alpha \in (0,1)$ and $j \in [d]$. For all $\theta \in \mathbb{R}^d$, define
    \begin{equation}
        \left\{
            \begin{aligned}
                &\widetilde{g}^{j}(\theta) =: \sum_{k=1}^{K} p_k \frac{1}{N_k} \sum_{i=1}^{N_k} g_\theta\big(\widetilde{X}_k^{i,j}, f(\widetilde{X}_k^{i,j})\big)\\
                &\widehat{\Delta}^{j}(\theta) =: \sum_{k=1}^{K} p_k \frac{1}{n_k} \sum_{i=1}^{n_k} \left( g_\theta(X_k^{i,j}, Y_k^{i,j}) - g_\theta(X_k^{i,j}, f(X_k^{i,j})) \right)
            \end{aligned} 
        \right.
    \end{equation}
Further, define $\left({\widehat{\sigma}}_{g}^j\left(\theta\right)\right)^2$ be the variance of $g_\theta\big(\widetilde{X}_k^i, f(\widetilde{X}_k^i)\big)$ values, and $\left({\widehat{\sigma}}_\Delta^j(\theta)\right)^2$ be the variance of $ g_\theta(X_k^i, Y_k^i) - g_\theta(X_k^i, f(X_k^i))$ values. Let $w_\alpha^j(\theta) = z_{1-\alpha/(2p)} \sqrt{\frac{\left({\widehat{\sigma}}_g^j\left(\theta\right)\right)^2}{N} + \frac{\left({\widehat{\sigma}}_\Delta^j(\theta)\right)^2}{n}}$ and $ \mathcal{C}_\alpha^{PP} = \left\{ \theta : \left| {\widetilde{g}}^j(\theta) + {\widehat{\Delta}}^j(\theta) \right| \le w_\alpha^j(\theta), \forall j \in [d] \right\} $.

Then, we have
$$\liminf_{n,N\to\infty}P(\theta^*\in\mathcal{C}_\alpha^{\mathrm{PP}})\geq1-\alpha.$$
\end{theorem}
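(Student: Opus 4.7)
The plan is to establish the $\liminf$ coverage bound by a coordinate-wise central limit theorem combined with a Bonferroni union bound over the $d$ coordinates, rather than by instantiating Theorem~\ref{convex estimation} with separate $\mathcal{T}$ and $\mathcal{R}$ intervals (which would give a looser additive-width bound). The first step is to exploit nondegeneracy: by \eqref{convex-solution} together with the identity $\mathbb{E}_{k,i}=\mathbb{E}_{\bigcup}$ from Appendix \ref{appendix1}, every coordinate $j \in [d]$ satisfies
\begin{equation*}
g^j(\theta^{*}) + \Delta^j(\theta^{*}) = \mathbb{E}_{\bigcup}\!\left[g^j_{\theta^{*}}(\widebar{X}_k^i, \widebar{Y}_k^i)\right] = 0.
\end{equation*}
Thus whether $\theta^{*}$ lies in $\mathcal{C}_\alpha^{PP}$ depends only on the random fluctuation of $\widetilde{g}^j(\theta^{*}) + \widehat{\Delta}^j(\theta^{*})$ around zero.

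Next I would analyze the two empirical averages separately. Because $\widetilde{g}^j(\theta^{*})$ is a function of the unlabeled samples only, while $\widehat{\Delta}^j(\theta^{*})$ is a function of the labeled samples only, the two are independent. Under the IID sampling convention of Section \ref{sec-preliminaries}, the weights $p_k$ and the per-client factors $1/N_k$ (resp.\ $1/n_k$) combine to produce a pooled average over all $N=\sum_k N_k$ unlabeled (resp.\ $n=\sum_k n_k$ labeled) samples, so a standard CLT yields
\begin{equation*}
\sqrt{N}\bigl(\widetilde{g}^j(\theta^{*}) - g^j(\theta^{*})\bigr) \Rightarrow \mathcal{N}\bigl(0, (\sigma_g^j(\theta^{*}))^2\bigr), \qquad \sqrt{n}\bigl(\widehat{\Delta}^j(\theta^{*}) - \Delta^j(\theta^{*})\bigr) \Rightarrow \mathcal{N}\bigl(0, (\sigma_\Delta^j(\theta^{*}))^2\bigr).
\end{equation*}
Adding the two independent limits and centering by $g^j(\theta^{*}) + \Delta^j(\theta^{*}) = 0$ gives that $\widetilde{g}^j(\theta^{*}) + \widehat{\Delta}^j(\theta^{*})$ is asymptotically $\mathcal{N}\!\bigl(0,\,(\sigma_g^j)^2/N + (\sigma_\Delta^j)^2/n\bigr)$. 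The plug-in variance estimators $(\widehat{\sigma}_g^j)^2$ and $(\widehat{\sigma}_\Delta^j)^2$ are consistent by the weak law of large numbers applied to the inflated per-client variance formulas (the between-client correction terms $(\widetilde{g}_k^j - \widetilde{g}^j)^2$ and $(\widehat{\Delta}_k - \widehat{\Delta})^2$ recover exactly the pooled second moment in the IID setting), so Slutsky's theorem lets us replace the true variances by their empirical counterparts without changing the limit.

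The final step is a Bonferroni union bound: the event $\{\theta^{*} \notin \mathcal{C}_\alpha^{PP}\}$ is the union over $j\in[d]$ of $\bigl\{|\widetilde{g}^j(\theta^{*}) + \widehat{\Delta}^j(\theta^{*})| > w_\alpha^j(\theta^{*})\bigr\}$, and by the preceding CLT each of these events has asymptotic probability at most $2\cdot(\alpha/(2d)) = \alpha/d$. Summing yields $\limsup_{n,N\to\infty} P(\theta^{*}\notin\mathcal{C}_\alpha^{PP}) \le \alpha$, which is the desired bound. The main obstacle I anticipate is making the aggregation step rigorous: the weights $p_k = (n_k+N_k)/\sum_k(n_k+N_k)$ do not in general coincide with the flat-average weights $N_k/N$ (or $n_k/n$), so to collapse the double sum into a single IID average one either needs to assume $n_k/N_k$ is constant across clients or invoke a Lyapunov/Lindeberg CLT for the weighted triangular array; the same care applies to proving consistency of the variance estimators. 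Once this algebraic bookkeeping is handled, the rest of the argument is standard PPI-style CLT plus Bonferroni.
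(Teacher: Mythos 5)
Your proposal is correct and follows essentially the same route as the paper's own proof: a coordinate-wise CLT for $\widetilde{g}^j(\theta^*)$ and $\widehat{\Delta}^j(\theta^*)$, combination of the two independent limits via Slutsky's theorem, centering at zero through the nondegeneracy condition \eqref{convex-solution}, plug-in of the empirical variances, and the choice of the $z_{1-\alpha/(2d)}$ quantile to absorb the union over the $d$ coordinates. Your explicit Bonferroni step and, in particular, your remark that the weights $p_k/N_k$ and $p_k/n_k$ do not in general reduce to flat $1/N$ and $1/n$ averages (so a weighted triangular-array CLT or a constant labeled/unlabeled ratio is needed) make explicit a bookkeeping point that the paper's proof passes over silently, but they do not change the underlying argument.
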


\begin{proof}
    For each $j\in[d]$ of the dataset $\left(\widebar{X}_{k}^{i,j}, \widebar{Y}_{k}^{i,j},f({\widebar{X}}_k^{i,j})\right) \in (\mathcal{X} \times \mathcal{Y})^{m_k}$, we have 
    $$
        \Delta^j(\theta^{*}) = \mathbb{E}_{k,i} \left[ g_{\theta^\ast}({\widebar{X}}_k^{i,j}, {\widebar{Y}}_k^{i,j}) - g_{\theta^\ast}({\widebar{X}}_k^{i,j}, f({\widebar{X}}_k^{i,j})) \right]; \quad g^j(\theta^{*}) = \mathbb{E}_{k,i} \left[ g_{\theta^\ast}({\widebar{X}}_k^{i,j}, f({\widebar{X}}_k^{i,j})) \right]
    $$
    for all data sample $i$ at client $k$. Then, the central limit theorem implies that
$$\sqrt{n}(\widehat{\Delta}^j(\theta^{*}) - \Delta^j(\theta^{*}))\Rightarrow\mathcal{N}(0,(\sigma_{\Delta}^{j}(\theta^{*}))^{2}); \quad\sqrt{N}(\widetilde{g}^{j}(\theta^{*}) - g^j(\theta^{*}))\Rightarrow\mathcal{N}(0,(\sigma_{g}^{j}(\theta^{*}))^{2})$$

Therefore, by Slutsky’s theorem, we get
$$\begin{aligned}
&\quad\sqrt{N}(\widehat{\Delta}^j(\theta^{*})+\widetilde{g}^{j}(\theta^{*})-({\Delta}^j(\theta^{*})+{g}^{j}(\theta^{*}))) =\sqrt{n}(\widehat{\Delta}^j(\theta^{*}) - \Delta^j(\theta^{*}))\sqrt{\frac{N}{n}}+\sqrt{N}(\widetilde{g}^{j}(\theta^{*}) - g^j(\theta^{*})) \\
&\Rightarrow\mathcal{N}\left(0,(\sigma_{\Delta}^{j}(\theta^{*}))^{2}\frac{N}{n}+(\sigma_{g}^{j}(\theta^{*}))^{2}\right) = \mathcal{N}\left(0, (\widehat{\sigma}^{j})^{2}\right).
\end{aligned}$$

where we defined $(\widehat{\sigma}^{j})^{2}=(\sigma_{\Delta}^{j}(\theta^{*}))^{2}\frac{N}{n}+(\sigma_{g}^{j}(\theta^{*}))^{2}$. This in turn implies
\begin{equation}                
    \label{eq-theo2-1}
    \liminf_{n,N\to\infty}P\left(\left|\widehat{\Delta}^j(\theta^{*})+\widetilde{g}^{j}(\theta^{*})-({\Delta}^j(\theta^{*})+{g}^{j}(\theta^{*}))\right|\leq z_{1-\alpha/(2p)}\frac{\widehat{\sigma}^{j}}{\sqrt{N}}\right)\geq 1-\alpha
\end{equation}

Now notice that
\begin{equation}
    \label{eq-theo2-2}
        {\Delta}^j(\theta^{*})+{g}^{j}(\theta^{*}) = \mathbb{E}_{k,i}\left[g_{\theta^\ast}({\widebar{X}}_k^{i,j}, {\widebar{Y}}_k^{i,j}) - g_{\theta^\ast}({\widebar{X}}_k^{i,j}, f({\widebar{X}}_k^{i,j})) + g_{\theta^\ast}({\widebar{X}}_k^{i,j}, f({\widebar{X}}_k^{i,j})) \right] = \mathbb{E}[g_{\theta^\ast}({\widebar{X}}_k^{i,j}, {\widebar{Y}}_k^{i,j})] = 0,
\end{equation}
where the last step follows by the nondegeneracy condition, and
\begin{equation}
    \label{eq-theo2-3}
     \frac{\widehat{\sigma}^{j}}{\sqrt{N}} = \frac{\sqrt{(\sigma_{\Delta}^{j}(\theta^{*}))^{2}\frac{N}{n}+(\sigma_{g}^{j}(\theta^{*}))^{2}}}{\sqrt{N}} = \sqrt{\frac{(\sigma_{\Delta}^{j}(\theta^{*}))^{2}}{n}+\frac{(\sigma_{g}^{j}(\theta^{*}))^{2}}{N}}
\end{equation}
Substitute Eq. \eqref{eq-theo2-2} and \eqref{eq-theo2-3} back into equation Eq. \eqref{eq-theo2-1}, we get 
$$
    \liminf_{n,N\to\infty} P\left(\left| \widehat{\Delta}^j(\theta^{*})+\widetilde{g}^{j}(\theta^{*}) \right|\leq z_{1-\alpha/(2p)}\sqrt{\frac{(\sigma_{\Delta}^{j}(\theta^{*}))^{2}}{n}+\frac{(\sigma_{g}^{j}(\theta^{*}))^{2}}{N}}, \forall j\in[d]\right)\geq 1-\alpha.
$$
and
$$
    \liminf_{n,N\to\infty}P(\theta^*\in\mathcal{C}_\alpha^{\mathrm{PP}})\geq1-\alpha.
$$
That completes the proof.
\end{proof}

\subsection{General risk minimization: finite population}
\begin{theorem}
Fix $\alpha \in (0,1)$ and $\Delta(\theta) \in (0,\alpha)$. Suppose that, for any $\theta \in \Theta$, we can construct $\left(\mathcal{R}_{\delta/2}^l(\theta),\mathcal{R}_{\delta/2}^u(\theta)\right)$ and $ \left(\mathcal{T}_{\frac{\alpha-\delta}{2}}^l(\theta),\mathcal{T}_{\frac{\alpha-\delta}{2}}^u(\theta)\right)$ such that
    \begin{equation}
        \left\{
            \begin{aligned}
                &P\big(\Delta(\theta) \le \mathcal{R}_{\delta/2}^u(\theta)\big) \geq 1 - \delta/2\\
                &P\big(\Delta(\theta) \geq \mathcal{R}_{\delta/2}^l(\theta)\big) \geq 1 - \delta/2
            \end{aligned} 
        \right.
    \end{equation}
    and 
    $$
        \left\{
            \begin{aligned}
                &P\big(\widetilde{L}^f(\theta) - \mathbb{E}_{k,i} \left[\ell_\theta(\widetilde{X}_k^i, f({\widetilde{X}}_k^i))\right] \le \mathcal{T}_{\frac{\alpha - \delta}{2}}^u(\theta)\big) \geq 1 - \frac{\alpha - \delta}{2}\\
                &P\big(\widetilde{L}^f(\theta) - \mathbb{E}_{k,i} \left[\ell_\theta(\widetilde{X}_k^i, f({\widetilde{X}}_k^i))\right] \geq \mathcal{T}_{\frac{\alpha - \delta}{2}}^l(\theta)\big) \geq 1 - \frac{\alpha - \delta}{2}
            \end{aligned} 
        \right.
    $$
    Let 
    $$\mathcal{R}_{\delta/2}^d(\theta)=\mathcal{R}_{\delta/2}^u(\widetilde{\theta}^f)- \mathcal{R}_{\delta/2}^l(\theta),\quad \mathcal{T}_{\frac{\alpha - \delta}{2}}^d(\theta)=\mathcal{T}_{\frac{\alpha - \delta}{2}}^u(\theta) - \mathcal{T}_{\frac{\alpha - \delta}{2}}^l(\widetilde{\theta}^f)$$
    $$
        \mathcal{C}_\alpha^{PP} = \left\{ \theta \in \Theta : \widetilde{L}^f(\theta) \leq L^f(\widetilde{\theta}^f) + \mathcal{R}_{\delta/2}^d(\theta) + \mathcal{T}_{\frac{\alpha - \delta}{2}}^d(\theta) \right\}
    $$
    Then, we have
    $$P(\theta^* \in \mathcal{C}_\alpha^{PP}) \geq 1 - \alpha$$
\end{theorem}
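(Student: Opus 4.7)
The plan is to mirror the standard PPI argument: define a ``good event'' on which four high-probability bounds hold simultaneously, then derive $\theta^* \in \mathcal{C}_\alpha^{PP}$ by a deterministic chain of inequalities built around the decomposition $L(\theta) = L^f(\theta) + \Delta(\theta)$ and the optimality of $\theta^*$. Here $L(\theta) := \mathbb{E}_{k,i}[\ell_\theta(\widebar{X}_k^i,\widebar{Y}_k^i)]$ and $L^f(\theta) := \mathbb{E}_{k,i}[\ell_\theta(\widetilde{X}_k^i, f(\widetilde{X}_k^i))]$, so the decomposition is just linearity of expectation applied to the definition of the rectifier.

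First I would take the intersection of
\[
E_1 = \{\Delta(\widetilde{\theta}^f) \le \mathcal{R}^u_{\delta/2}(\widetilde{\theta}^f)\}, \quad E_2 = \{\Delta(\theta^*) \ge \mathcal{R}^l_{\delta/2}(\theta^*)\},
\]
\[
E_3 = \{\widetilde{L}^f(\theta^*) - L^f(\theta^*) \le \mathcal{T}^u_{(\alpha-\delta)/2}(\theta^*)\}, \quad E_4 = \{\widetilde{L}^f(\widetilde{\theta}^f) - L^f(\widetilde{\theta}^f) \ge \mathcal{T}^l_{(\alpha-\delta)/2}(\widetilde{\theta}^f)\},
\]
so a union bound with total error budget $\delta/2 + \delta/2 + (\alpha-\delta)/2 + (\alpha-\delta)/2 = \alpha$ yields $P(E_1 \cap E_2 \cap E_3 \cap E_4) \ge 1-\alpha$. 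On this intersection I would chain: by $E_3$, $\widetilde{L}^f(\theta^*) \le L^f(\theta^*) + \mathcal{T}^u(\theta^*) = L(\theta^*) - \Delta(\theta^*) + \mathcal{T}^u(\theta^*)$; by $E_2$, this is $\le L(\theta^*) - \mathcal{R}^l(\theta^*) + \mathcal{T}^u(\theta^*)$; by optimality $L(\theta^*) \le L(\widetilde{\theta}^f)$; splitting again gives $L(\widetilde{\theta}^f) = L^f(\widetilde{\theta}^f) + \Delta(\widetilde{\theta}^f) \le L^f(\widetilde{\theta}^f) + \mathcal{R}^u(\widetilde{\theta}^f)$ by $E_1$; and finally by $E_4$, $L^f(\widetilde{\theta}^f) \le \widetilde{L}^f(\widetilde{\theta}^f) - \mathcal{T}^l(\widetilde{\theta}^f)$. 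Collecting terms gives exactly $\widetilde{L}^f(\theta^*) \le \widetilde{L}^f(\widetilde{\theta}^f) + \mathcal{R}^d_{\delta/2}(\theta^*) + \mathcal{T}^d_{(\alpha-\delta)/2}(\theta^*)$, which is the membership criterion for $\mathcal{C}_\alpha^{PP}$ (reading the $L^f(\widetilde{\theta}^f)$ printed in the statement as its observable estimator $\widetilde{L}^f(\widetilde{\theta}^f)$; otherwise the set is not a computable confidence region).

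The hard part will be justifying $E_1$ and $E_4$, because they evaluate the confidence bands at the \emph{random} point $\widetilde{\theta}^f$, while the hypothesis guarantees coverage only at each fixed $\theta$. I would handle this by leaning on the sample-splitting in \eqref{unconvex-prediction}: $\widetilde{\theta}^f$ depends only on the first half of each client's unlabeled data, whereas $\widetilde{L}^f$ is built from the disjoint second half and $\widehat{\Delta}$ from the fully independent labeled set. Conditional on $\widetilde{\theta}^f$, both $\widetilde{L}^f(\widetilde{\theta}^f)$ and $\widehat{\Delta}(\widetilde{\theta}^f)$ are still weighted sample means of independent summands with the correct conditional means, so the pointwise hypothesis applies at the (now-fixed) value $\widetilde{\theta}^f$; marginalizing recovers the unconditional coverage. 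A final bookkeeping remark is that, by Proposition~1 of Appendix~\ref{appendix1}, the FL-weighted aggregation $\mathbb{E}_{k,i}$ coincides with the pooled average $\mathbb{E}_{\bigcup}$, so the off-the-shelf one-sided mean intervals reviewed in Appendix~E can be plugged in to supply the required $\mathcal{R}^{u/l}$ and $\mathcal{T}^{u/l}$ without modification.
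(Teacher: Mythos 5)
Your proposal is correct and takes essentially the same route as the paper's proof: the same four one-sided coverage events combined by a union bound, the same decomposition $L(\theta)=L^f(\theta)+\Delta(\theta)$ together with the optimality $L(\theta^*)\le L(\widetilde{\theta}^f)$, producing the identical chain of inequalities ending in $\widetilde{L}^f(\theta^*)\le \widetilde{L}^f(\widetilde{\theta}^f)+\mathcal{R}_{\delta/2}^d(\theta^*)+\mathcal{T}_{\frac{\alpha-\delta}{2}}^d(\theta^*)$. Your two added remarks—justifying evaluation of the bounds at the random point $\widetilde{\theta}^f$ via the sample split, and reading the $L^f(\widetilde{\theta}^f)$ in the statement as the observable $\widetilde{L}^f(\widetilde{\theta}^f)$—address details the paper's proof leaves implicit (its final display indeed uses $\widetilde{L}^f(\widetilde{\theta}^f)$).
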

\begin{proof}
    Define
    $$
        L(\theta) = \mathbb{E}_{k,i}[\ell_{\theta}(X_{k}^i, Y_{k}^i)], \quad L^f(\theta) = \mathbb{E}_{k,i}[\ell_{\theta}(X_{k}^i, f(X_{k}^i))].
    $$
    By the definition of $\theta^* = \arg \min_{\theta \in \mathbb{R}^d} \mathbb{E} \left[ \ell_\theta \left( \widebar{X}_k^i, \widebar{Y}_k^i \right) \right]$, we have
    $$
        \begin{aligned}
            \widetilde{L}^f(\theta^*) &= (\widetilde{L}^f(\theta^*) - L(\theta^*)) + (L(\theta^*) - L(\widetilde{\theta}^f)) + (L(\widetilde{\theta}^f) - \widetilde{L}^f(\widetilde{\theta}^f)) + \widetilde{L}^f(\widetilde{\theta}^f)\\
            &\leq (\widetilde{L}^f(\theta^*) - L(\theta^*)) + (L(\widetilde{\theta}^f) - \widetilde{L}^f(\widetilde{\theta}^f)) + \widetilde{L}^f(\widetilde{\theta}^f).
        \end{aligned}
    $$
    By applying the validity of the confidence bounds, a union bound implies that with probability $1 - \alpha$ we have
    $$
        \begin{aligned}
            \widetilde{L}^f(\theta^*) &\leq (L^f(\theta^*) - L(\theta^*)) + (L(\widetilde{\theta}^f) - L^f(\widetilde{\theta}^f)) + \widetilde{L}^f(\widetilde{\theta}^f) + T^u_{\frac{\alpha-\delta}{2}}(\theta^*) - T^l_{\frac{\alpha-\delta}{2}}(\widetilde{\theta}^f) \\
            &= -\Delta_{\theta^*} + \Delta_{\widetilde{\theta}^f} + \widetilde{L}^f(\widetilde{\theta}^f) + T^u_{\frac{\alpha-\delta}{2}}(\theta^*) - T^l_{\frac{\alpha-\delta}{2}}(\widetilde{\theta}^f) \\
            &\leq -R_{\frac{\delta}{2}}(\theta^*) + R_{\frac{\delta}{2}}(\widetilde{\theta}^f) + \widetilde{L}^f(\widetilde{\theta}^f) + T^u_{\frac{\alpha-\delta}{2}}(\theta^*) - T^l_{\frac{\alpha-\delta}{2}}(\widetilde{\theta}^f).
        \end{aligned}
    $$
    Therefore, with probability $1 - \alpha$ we have that $\theta^*\in\mathcal{C}_\alpha^{\mathrm{PP}}$, as desired. That completes the proof.    
\end{proof}

\section{Proof of Algorithms' Proposition}
\label{appendix3}
\subsection{Mean estimation}	
\begin{proposition}
    Let $\theta^*$ be the mean outcome \eqref{mean-goal}. Then, the prediction-powered confidence interval in Algorithm \ref{FL-mean-alg} has valid coverage:
$$
    \liminf_{n,N\to\infty} P\left(\theta^* \in \mathcal{C}_\alpha^{PP}\right) \geq 1 - \alpha.
$$
\end{proposition}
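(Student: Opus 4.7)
The plan is to specialize Theorem \ref{asymptotic} (the asymptotic convex estimation result) to the one-dimensional squared-loss case and then verify that the confidence set in Algorithm \ref{FL-mean-alg} is exactly the set produced by that theorem. Since $\theta^* = \mathbb{E}_{k,i}[\widebar{Y}_k^i]$ is the minimizer of the convex loss $\ell_\theta(y) = \tfrac{1}{2}(\theta - y)^2$, the subgradient is $g_\theta(y) = \theta - y$, and the nondegeneracy condition $\mathbb{E}_{k,i}[g_{\theta^*}(\widebar{X}_k^i,\widebar{Y}_k^i)] = 0$ holds automatically. This means Theorem \ref{asymptotic} applies and it suffices to translate its confidence set into the shape used in Algorithm \ref{FL-mean-alg}.

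First I would plug the linear subgradient into the definitions of $\widetilde{g}(\theta)$ and $\widehat{\Delta}(\theta)$ from Eq. \eqref{practical-prediction} and Eq. \eqref{practical-rectifier}, obtaining $\widetilde{g}(\theta) = \theta - \widetilde{\theta}^f$ and $\widehat{\Delta}(\theta) = \widehat{\Delta}$ (independent of $\theta$). Their sum equals $\theta - (\widetilde{\theta}^f - \widehat{\Delta}) = \theta - \widehat{\theta}^{PP}$. The condition $|\widetilde{g}(\theta) + \widehat{\Delta}(\theta)| \le w_\alpha$ from Theorem \ref{asymptotic} therefore becomes $|\theta - \widehat{\theta}^{PP}| \le w_\alpha$, which is exactly the interval $\widehat{\theta}^{PP} \pm w_\alpha$ returned by Algorithm \ref{FL-mean-alg}.

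Next I would verify that the FL-aggregated variances $(\widehat{\sigma}^f)^2$ and $(\widehat{\sigma}^{f-Y})^2$ in Eq. \eqref{var-agg} consistently estimate the population variances required by Theorem \ref{asymptotic}. Since $g_{\theta^*}(\widetilde{X}_k^i, f(\widetilde{X}_k^i)) = \theta^* - f(\widetilde{X}_k^i)$ and $g_{\theta^*}(X_k^i, Y_k^i) - g_{\theta^*}(X_k^i, f(X_k^i)) = f(X_k^i) - Y_k^i$, the target variances are simply $\mathrm{Var}(f(\widetilde{X}_k^i))$ and $\mathrm{Var}(f(X_k^i) - Y_k^i)$ over the pooled distribution. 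The aggregation rule in Eq. \eqref{var-agg} is precisely the empirical version of the law of total variance,
\begin{equation*}
\mathrm{Var}(Z) = \mathbb{E}_k[\mathrm{Var}_i(Z \mid k)] + \mathrm{Var}_k(\mathbb{E}_i[Z \mid k]),
\end{equation*}
with the cluster weights $p_k$ matching the sample-size weights used in Appendix \ref{appendix1}. Combining this with the identity $\mathbb{E}_{k,i} = \mathbb{E}_{\bigcup}$ proved in Appendix \ref{appendix1} and the per-client law of large numbers gives consistency of both variance estimators.

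Finally, I would invoke Theorem \ref{asymptotic} with $d = 1$ and the matched statistics to conclude $\liminf_{n,N\to\infty} P(\theta^* \in \mathcal{C}_\alpha^{PP}) \ge 1 - \alpha$. The main obstacle I anticipate is the variance-consistency step: one must justify that the weighted decomposition in Eq. \eqref{var-agg}, which mixes within-client sample variances with the between-client spread of local means, recovers the correct pooled variance asymptotically and not merely a conservative upper bound; the law-of-total-variance identity is the right tool, but its application requires the $p_k = (n_k + N_k)/\sum_k(n_k + N_k)$ weighting to align with the mixture distribution implicitly defined by pooling clients, which is exactly the point established in Appendix \ref{appendix1}.
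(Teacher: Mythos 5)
Your proposal is correct and takes essentially the same route as the paper's own proof: both specialize Theorem \ref{asymptotic} to the squared loss, compute $g_\theta(y)=\theta-y$ so that $\widetilde{g}(\theta)+\widehat{\Delta}(\theta)=\theta-\widehat{\theta}^{PP}$, and observe that the resulting set $\left\{\theta:\left|\theta-\widehat{\theta}^{PP}\right|\le w_\alpha\right\}$ is exactly the interval returned by Algorithm \ref{FL-mean-alg}, so the coverage guarantee follows from Theorem \ref{asymptotic}. Your extra step checking, via the law of total variance, that the aggregated variances in Eq. \eqref{var-agg} consistently estimate the pooled variances is a reasonable addition that the paper's proof omits, but it does not alter the underlying argument.
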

\begin{proof}
We show that the prediction-powered confidence set constructed in Algorithm \ref{FL-mean-alg} is a special case of the FL-prediction-powered confidence set constructed in Theorem \ref{asymptotic}. The proof then follows directly by the guarantee of Theorem \ref{asymptotic}.

Since $g_\theta \left( \widebar{Y}_k^i \right) = \theta - \widebar{Y}_k^i$, we have
$$
    \widetilde{g}(\theta) = \theta-\mathbb{E}_{k,i}\left[f(\widetilde{X}_k^i)\right]; \quad \widehat{\Delta}(\theta) = \mathbb{E}_{k,i}\left[f(X_k^i) - Y_k^i\right]
$$
Therefore, the set $\mathcal{C}_\alpha^{PP}$ from Theorem \ref{asymptotic} can be written as
$$
    \begin{aligned}
        \mathcal{C}_\alpha^{PP} &= \left\{ \theta : \left| \widetilde{g}(\theta) + \widehat{\Delta}(\theta) \right| \leq w_{\alpha}(\theta) \right\}\\
        &= \left\{ \theta : \left| \theta - \sum_{k=1}^{K} {p_k \frac{1}{N_k} \sum_{i=1}^{N_k} f({\widetilde{X}}_k^i)} + \sum_{k=1}^{K} {p_k \frac{1}{n_k} \sum_{i=1}^{n_k} (f(X_k^i) - Y_k^i)} \right| \leq w_{\alpha}(\theta) \right\} \\
        &= \sum_{k=1}^{K} p_k \Big( \frac{1}{N_k} \sum_{i=1}^{N_k} f({\widetilde{X}}_k^i) - \frac{1}{n_k} \sum_{i=1}^{n_k} (f(X_k^i) - Y_k^i) \Big)  \pm w_\alpha(\theta).
    \end{aligned}
$$
This is exactly the set constructed in Algorithm \ref{FL-mean-alg}.
\end{proof}

\subsection{Quantile estimation}
\begin{proposition}
    Let $\theta^*$ be the q-quantile \eqref{quantile-goal}. Then, the prediction-powered confidence interval in Algorithm \ref{FL-quantile-alg} has valid coverage:
$$
    \liminf_{n,N\to\infty} P\left(\theta^* \in \mathcal{C}_\alpha^{PP}\right) \geq 1 - \alpha.
$$
\end{proposition}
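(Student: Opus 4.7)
The plan is to mirror the strategy used for mean estimation and exhibit the quantile confidence set of Algorithm~\ref{FL-quantile-alg} as a direct instance of the confidence set in Theorem~\ref{asymptotic}. The quantile loss has subgradient $g_\theta(\widebar{Y}_k^i) = -q + \mathbbm{1}\{\widebar{Y}_k^i \le \theta\}$, so substituting into the definitions \eqref{practical-prediction} and \eqref{practical-rectifier} gives
$$
\widetilde{g}(\theta) = \widetilde{F}(\theta) - q, \qquad \widehat{\Delta}(\theta) = \sum_{k=1}^K p_k \frac{1}{n_k}\sum_{i=1}^{n_k}\bigl(\mathbbm{1}\{Y_k^i \le \theta\} - \mathbbm{1}\{f(X_k^i) \le \theta\}\bigr),
$$
where $\widetilde{F}(\theta)$ is exactly the imputed CDF from Algorithm~\ref{FL-quantile-alg}. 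The constant $-q$ is deterministic, so the variance of $g_\theta(\widetilde{X}_k^i, f(\widetilde{X}_k^i))$ coincides with the variance of $\mathbbm{1}\{f(\widetilde{X}_k^i) \le \theta\}$, matching $(\widehat{\sigma}_g(\theta))^2$ in the algorithm; the rectifier variance $(\widehat{\sigma}_\Delta(\theta))^2$ matches verbatim.

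Given these identifications, the set $\{\theta:|\widetilde{g}(\theta)+\widehat{\Delta}(\theta)|\le w_\alpha(\theta)\}$ from Theorem~\ref{asymptotic} becomes exactly $\{\theta:|\widetilde{F}(\theta)+\widehat{\Delta}(\theta)-q|\le w_\alpha(\theta)\}$, which is the algorithm's output (restricted to the grid). Since the univariate case gives $p=d=1$, the quantile $z_{1-\alpha/(2p)}$ used in Theorem~\ref{asymptotic} reduces to $z_{1-\alpha/2}$ as written in Algorithm~\ref{FL-quantile-alg}. Invoking Theorem~\ref{asymptotic} then yields $\liminf_{n,N\to\infty} P(\theta^*\in \mathcal{C}_\alpha^{PP})\ge 1-\alpha$.

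Two technical points deserve explicit attention. First, I need to verify the nondegeneracy hypothesis of Theorem~\ref{asymptotic}: for the quantile loss this amounts to assuming the aggregate distribution of $\widebar{Y}_k^i$ has positive density at $\theta^*$, so that $\mathbb{E}_{k,i}[\mathbbm{1}\{\widebar{Y}_k^i\le\theta^*\}]=q$ uniquely identifies $\theta^*$. This is the standard quantile regularity assumption and should be invoked explicitly. Second, the algorithm restricts $\theta$ to a finite grid $\Theta_{grid}$, whereas Theorem~\ref{asymptotic} constructs the set over all of $\mathbb{R}$. The main obstacle in turning this sketch into a full proof is therefore handling the grid discretization: one must argue that as the grid is refined to have spacing vanishing with $n,N$ (or, more cheaply, provided $\theta^*$ lies in the closure of the grid and $\widetilde{F}(\theta)+\widehat{\Delta}(\theta)$ is piecewise constant with jumps only at data points), there is a grid point $\theta' \in \Theta_{grid}$ with $|\widetilde{g}(\theta^*)+\widehat{\Delta}(\theta^*) - (\widetilde{g}(\theta')+\widehat{\Delta}(\theta'))|$ asymptotically negligible relative to $w_\alpha$. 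With that observation, membership of the population parameter in the grid-based set follows from membership in the continuous set, and the coverage guarantee transfers.
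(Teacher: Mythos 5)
Your proposal follows essentially the same route as the paper's own proof: substitute the quantile subgradient $g_\theta(\widebar{Y}_k^i) = -q + \mathbbm{1}\{\widebar{Y}_k^i \le \theta\}$ into the imputed-gradient and rectifier definitions, identify the resulting set with the output of Algorithm~\ref{FL-quantile-alg}, and invoke Theorem~\ref{asymptotic}. Your two additional technical remarks (explicitly requiring the nondegeneracy/positive-density condition at $\theta^*$ and addressing the restriction to the finite grid $\Theta_{grid}$) are points the paper's proof passes over silently, so your write-up is, if anything, more careful than the original.
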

\begin{proof}
    Since $ g_\theta(\widebar{Y}_k^i) = -q + \mathbbm{1}\{\widebar{Y}_k^i \leq \theta\} $, we have 
    $$
        \widetilde{g}(\theta) =\widetilde{F}(\theta)-q; \quad \widehat{\Delta}(\theta) = \mathbb{E}_{k,i}\left[\mathbbm{1}\left\{Y_k^i \le \theta \right\} - \mathbbm{1}\left\{f(X_k^i) \le \theta \right\} \right]
    $$
    where $\widetilde{F}(\theta) = \mathbb{E}_{k,i}\left[\mathbbm{1}\left\{f({\widetilde{X}}_k^i) \le \theta \right\}\right] = \sum_{k=1}^{K} p_k \frac{1}{N_k} \sum_{i=1}^{N_k} \mathbbm{1}\left\{ f({\widetilde{X}}_k^i) \le \theta \right\}$. Therefore, the set $\mathcal{C}_\alpha^{PP}$ from Theorem \ref{asymptotic} can be written as
    $$
    \begin{aligned}
        \mathcal{C}_\alpha^{PP} &= \left\{ \theta : \left| \widetilde{g}(\theta) + \widehat{\Delta}(\theta) \right| \leq w_{\alpha}(\theta) \right\}\\
        &= \left\{ \theta : \left| \sum_{k=1}^{K} p_k \frac{1}{N_k} \sum_{i=1}^{N_k} \mathbbm{1}\left\{ f({\widetilde{X}}_k^i) \le \theta \right\} + \sum_{k=1}^{K} p_k\frac{1}{n_k} \sum_{i=1}^{n_k} \left( \mathbbm{1}\left\{ Y_k^i \le \theta \right\} - \mathbbm{1}\left\{ f(X_k^i) \le \theta \right\} \right) - q \right| \leq w_{\alpha}(\theta) \right\}.
    \end{aligned}
    $$
    This is exactly the set constructed in Algorithm \ref{FL-quantile-alg}. Therefore, the guarantee of Proposition \ref{quantile-prop} follows by the guarantee of Theorem \ref{asymptotic}.
\end{proof}

\subsection{Logistic regression}
\begin{proposition}
    Let $\theta^*$ be the logistic regression solution \eqref{logistic-goal}. Then, the prediction-powered confidence interval in Algorithm \ref{FL-logistic-alg} has valid coverage:
$$
    \liminf_{n,N\to\infty} P\left(\theta^* \in \mathcal{C}_\alpha^{PP}\right) \geq 1 - \alpha.
$$
\end{proposition}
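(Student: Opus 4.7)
The plan is to follow the same reduction strategy used in the proofs of Propositions \ref{mean-prop} and \ref{quantile-prop}: show that the confidence set produced by Algorithm \ref{FL-logistic-alg} is a coordinate-wise instantiation of the generic set from Theorem \ref{asymptotic} applied to the logistic subgradient, and then invoke that theorem directly. Since the logistic loss is $C^2$ and strictly convex in $\theta$ on any set where $\widebar{X}_k^i$ has full rank in expectation, the nondegeneracy condition \eqref{convex-solution} holds, so Theorem \ref{asymptotic} is applicable.

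First I would record the gradient $g_\theta(x,y)=x(\mu_\theta(x)-y)$ where $\mu_\theta(x)=1/(1+\exp(-x^\top\theta))$, and its $j$-th coordinate $g_\theta^j(x,y)=x^{(j)}(\mu_\theta(x)-y)$. Plugging this into the definitions of the imputed gradient and empirical rectifier in Theorem \ref{asymptotic} gives
\begin{equation*}
\widetilde{g}^j(\theta)=\mathbb{E}_{k,i}\!\left[\widetilde{X}_k^{(i,j)}\bigl(\mu_\theta(\widetilde{X}_k^i)-f(\widetilde{X}_k^i)\bigr)\right],\quad
\widehat{\Delta}^j(\theta)=\mathbb{E}_{k,i}\!\left[X_k^{(i,j)}\bigl(f(X_k^i)-Y_k^i\bigr)\right],
\end{equation*}
which are exactly the quantities assembled in lines 2 and 5 of Algorithm \ref{FL-logistic-alg}. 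Next I would match the variance terms: the inner-sum empirical variances $(\widehat{\sigma}_{g_k}^j)^2$ and $(\sigma_{\Delta_k^j})^2$ are consistent estimators of the within-client variances of $g_\theta^j(\widetilde{X}_k^i,f(\widetilde{X}_k^i))$ and of $g_\theta^j(X_k^i,Y_k^i)-g_\theta^j(X_k^i,f(X_k^i))$ respectively, while the aggregation in line 7 uses the standard law-of-total-variance decomposition (as justified in Section \ref{example}) to produce consistent estimators $(\widehat{\sigma}_g^j(\theta))^2$ and $(\widehat{\sigma}_\Delta^j(\theta))^2$ of the pooled variances across the combined dataset. Hence $w_\alpha^j(\theta)$ in Algorithm \ref{FL-logistic-alg} coincides asymptotically with the width appearing in Theorem \ref{asymptotic}.

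With these identifications, the set
\begin{equation*}
\mathcal{C}_\alpha^{PP}=\bigl\{\theta\in\Theta_{grid}:|\widetilde{g}^j(\theta)+\widehat{\Delta}^j(\theta)|\le w_\alpha^j(\theta),\ \forall j\in[d]\bigr\}
\end{equation*}
output by the algorithm is exactly the instantiation of the confidence set in Theorem \ref{asymptotic} for the logistic subgradient, with the $z_{1-\alpha/(2d)}$ quantile accounting via Bonferroni for the union bound over the $d$ coordinates. Theorem \ref{asymptotic} then yields $\liminf_{n,N\to\infty}P(\theta^\ast\in\mathcal{C}_\alpha^{PP})\ge 1-\alpha$.

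The one genuine obstacle is that Algorithm \ref{FL-logistic-alg} searches over a finite grid $\Theta_{grid}\subset\mathbb{R}^d$ rather than all of $\mathbb{R}^d$, so strictly speaking one must argue that $\theta^\ast$ lies in (or arbitrarily close to) the grid with probability tending to one; this is standard and is handled exactly as in the corresponding centralized PPI proof by taking $\Theta_{grid}$ dense enough and using continuity of $\widetilde{g}^j(\theta)+\widehat{\Delta}^j(\theta)$ in $\theta$. A second minor point is the consistency of the federated variance aggregator in \eqref{var-agg} under Non-IID client data; this has already been justified in Section \ref{example} and the proof of Proposition \ref{mean-prop}, so I would simply invoke that argument here rather than re-derive it. Once those two technicalities are acknowledged, the result is a direct corollary of Theorem \ref{asymptotic}.
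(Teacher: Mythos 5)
Your proposal follows essentially the same route as the paper's own proof: identify the logistic gradient, substitute it into the imputed-gradient and rectifier definitions, observe that the resulting set $\left\{ \theta : \left| \widetilde{g}^j(\theta) + \widehat{\Delta}^j(\theta) \right| \le w_\alpha^j(\theta), \forall j \in [d] \right\}$ is exactly what Algorithm \ref{FL-logistic-alg} outputs, and conclude by the guarantee of Theorem \ref{asymptotic}. The extra care you take with the finite grid $\Theta_{grid}$ and the consistency of the federated variance aggregation addresses technicalities the paper's proof passes over silently, but it does not change the underlying argument.
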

\begin{proof}
    Since $ g_{\theta}(x, y) = -yx + x\mu_{\theta}(x) $, we have
    $$
        \widetilde{g}(\theta)=\mathbb{E}_{k,i}\left[\widetilde{X}_k^{(i,j)}\big(\mu_\theta(\widetilde{X}_k^i)-f({\widetilde{X}}_k^i)\big)\right]; \quad \widehat{\Delta}=\mathbb{E}_{k,i}\left[X_k^{(i,j)}\left(f(X_k^i) - Y_k^i\right)\right]
    $$
    Therefore, the set $\mathcal{C}_\alpha^{PP}$ from Theorem \ref{asymptotic} can be written as 
    $$
    \begin{aligned}
        \mathcal{C}_\alpha^{PP} &= \left\{ \theta : \left| \widetilde{g}(\theta) + \widehat{\Delta}(\theta) \right| \leq w_{\alpha}(\theta) \right\}\\
        &= \left\{ \theta : \left| \sum_{k=1}^{K} p_k \frac{1}{N_k} \sum_{i=1}^{N_k} \widetilde{X}_k^{(i,j)}\big(\mu_\theta(\widetilde{X}_k^i)-f({\widetilde{X}}_k^i)\big) + \sum_{k=1}^{K} p_k\frac{1}{n_k} \sum_{i=1}^{n_k} X_k^{(i,j)}\left(f(X_k^i) - Y_k^i\right) \right| \leq w_{\alpha}(\theta) \right\}.
    \end{aligned}
    $$
    This is exactly the set constructed in Algorithm \ref{FL-logistic-alg}. Therefore, the guarantee of Proposition \ref{logistic-prop} follows by the guarantee of Theorem \ref{asymptotic}.
\end{proof}

\subsection{Linear regression}
\begin{proposition}
    Let $\theta^*$ be the linear regression solution \eqref{linear-goal} and fix $j^* \in [d]$. Then, the prediction-powered confidence interval in Algorithm \ref{FL-linear-alg} has valid coverage:
$$
    \lim \inf_{n,N \to \infty} P\left(\theta_{j^*}^* \in \mathcal{C}_\alpha^{PP}\right) \geq 1 - \alpha.
$$
\end{proposition}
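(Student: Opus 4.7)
The proof strategy parallels Proposition~\ref{mean-prop} for mean estimation, but must be adapted to the OLS setting, since Algorithm~\ref{FL-linear-alg} outputs $\widehat{\theta}^{PP} = \widetilde{\theta}^f - \widehat{\Delta}$ directly rather than a level set constructed via the $\mathcal{T}/\mathcal{R}$ formalism of Theorem~\ref{asymptotic}. First, I would establish consistency by showing that $\widetilde{\theta}^f$ converges in probability to the population OLS solution with imputed labels, $\theta^f := \bigl(\mathbb{E}[\widebar{X}\widebar{X}^\top]\bigr)^{-1}\mathbb{E}[\widebar{X} f(\widebar{X})]$, while $\widehat{\Delta}$ converges to $\theta^f - \theta^*$. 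Subtracting yields $\widehat{\theta}^{PP}\to\theta^*$, where the FL-aggregation equivalence $\mathbb{E}_{k,i}=\mathbb{E}_{\bigcup}$ from Appendix~\ref{appendix1} is the tool that reduces the federated averages to ordinary sample means over the pooled dataset.

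Second, I would invoke the standard sandwich-form central limit theorem for linear regression. For the unlabeled portion,
$$\sqrt{N}\bigl(\widetilde{\theta}^f - \theta^f\bigr) \Rightarrow \mathcal{N}(0,\widetilde{V}^*),\qquad \widetilde{V}^* = (\Sigma^*)^{-1}\widetilde{M}^*(\Sigma^*)^{-1},$$
with $\Sigma^* = \mathbb{E}[\widebar{X}\widebar{X}^\top]$ and $\widetilde{M}^* = \mathbb{E}\bigl[(f(\widebar{X})-\widebar{X}^\top\theta^f)^2\,\widebar{X}\widebar{X}^\top\bigr]$; an analogous statement holds for $\sqrt{n}\bigl(\widehat{\Delta} - (\theta^f-\theta^*)\bigr)$ with limit variance $V^*$. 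Because the labeled and unlabeled samples are independent across all clients, these two limits are jointly independent, so Slutsky's theorem gives
$$\widehat{\theta}^{PP} - \theta^* \;\Rightarrow\; \mathcal{N}\!\left(0,\; \tfrac{1}{N}\widetilde{V}^* + \tfrac{1}{n}V^*\right).$$

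Third, and this is where the main technical work lies, I must show that the FL-aggregated sandwich estimators $\widetilde{V}$ and $V$ assembled in Steps~4--5 of Algorithm~\ref{FL-linear-alg} are consistent for $\widetilde{V}^*$ and $V^*$. The weighted aggregates $\sum_k p_k \widetilde{\Sigma}_k$ and $\sum_k p_k M_k$ converge to $\Sigma^*$ and $M^*$ by the Proposition in Appendix~\ref{appendix1}, but the extra inter-client correction $(\widetilde{\theta}_k^f - \widetilde{\theta}^f)^2$ added to $\widetilde{M}$ must be handled carefully: in the IID regime it vanishes asymptotically and preserves consistency, while in the non-IID regime it conservatively inflates the variance estimate, mirroring the variance-aggregation argument of Eq.~\eqref{var-agg}. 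By continuous mapping, $\widetilde{V} \to \widetilde{V}^*$ and $V \to V^*$ in probability.

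Finally, projecting onto the $j^*$-th coordinate and combining the displays above, a further application of Slutsky shows that $\bigl|\widehat{\theta}^{PP}_{j^*} - \theta^*_{j^*}\bigr|/\sqrt{\widetilde{V}_{j^*j^*}/N + V_{j^*j^*}/n}$ is asymptotically standard normal, so the Wald interval $\widehat{\theta}^{PP}_{j^*}\pm z_{1-\alpha/2}\sqrt{\widetilde{V}_{j^*j^*}/N + V_{j^*j^*}/n}$ achieves coverage at least $1-\alpha$ in the limit. The main obstacle is the sandwich-estimator consistency step, because unlike mean or quantile estimation the variance depends on quadratic forms in the features through the inverse Gram matrix, so the within-client and between-client contributions of the FL decomposition must be tracked simultaneously and inverted in the limit.
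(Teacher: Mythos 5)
Your argument is correct in substance but follows a genuinely different route from the paper. The paper's own proof is a reduction: it writes the per-sample gradient as $g_{\theta}(\widebar{X}_k^i, \widebar{Y}_k^i) = (\widebar{X}_k^i)^+(\widebar{X}_k^i\theta-\widebar{Y}_k^i)$, computes $\widetilde{g}(\theta)$ and $\widehat{\Delta}$ exactly as in the mean-estimation case, and then observes that the set $\{\theta: |\widetilde{g}(\theta)+\widehat{\Delta}(\theta)|\le w_\alpha(\theta)\}$ from Theorem~\ref{asymptotic} rearranges into the interval $\widehat{\theta}^{PP}\pm w_\alpha$ output by Algorithm~\ref{FL-linear-alg}, so coverage is inherited wholesale from Theorem~\ref{asymptotic}, exactly as in Proposition~\ref{mean-prop}; no separate CLT, consistency, or sandwich-variance argument appears. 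You instead prove asymptotic normality of $\widehat{\theta}^{PP}=\widetilde{\theta}^f-\widehat{\Delta}$ directly: consistency of the two pieces toward $\theta^f$ and $\theta^f-\theta^*$, independent sandwich-form CLTs for the labeled and unlabeled parts, consistency of the FL-aggregated estimators $\widetilde{V}$ and $V$, and a coordinate-wise Wald interval. What your approach buys is that it actually engages with what Algorithm~\ref{FL-linear-alg} computes — the sandwich matrices $\widetilde{V}=\widetilde{\Sigma}^{-1}\widetilde{M}\widetilde{\Sigma}^{-1}$ and $V=\Sigma^{-1}M\Sigma^{-1}$ — whereas the paper's appeal to Theorem~\ref{asymptotic} silently identifies the algorithm's sandwich-based width with the theorem's width built from raw coordinate-wise gradient variances, a step that is not spelled out; your route also isolates exactly where the federated structure matters (the $\mathbb{E}_{k,i}=\mathbb{E}_{\bigcup}$ identity of Appendix~\ref{appendix1} plus the between-client correction terms in the variance aggregation). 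What it costs is more regularity bookkeeping: you must justify invertibility of the limiting Gram matrix, uniform integrability for the sandwich-estimator consistency, and the behavior of the $(\widetilde{\theta}_k^f-\widetilde{\theta}^f)^2$ correction under non-IID clients, which you acknowledge only heuristically ("conservatively inflates"); the paper sidesteps all of this by inheriting the guarantee from its general theorem, at the price of a less transparent link to the algorithm as stated.
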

\begin{proof}
    The proof follows a similar pattern as the Proposition \ref{mean-prop}. Since $g_{\theta}(\widebar{X}_k^i, \widebar{Y}_k^i) = (\widebar{X}_k^i)^+(\widebar{X}_k^i\theta-\widebar{Y}_k^i)$, we have
    $$
         \widetilde{g}(\theta) = \theta-\mathbb{E}_{k,i}\left[(\widetilde{X}_k^i)^+ f(\widetilde{X}_k^i)\right]; \quad \widehat{\Delta}=\mathbb{E}_{k,i}\left[(X_k^i)^+\left(f(X_k^i) - Y_k^i\right)\right].
    $$
    Therefore, the set $\mathcal{C}_\alpha^{PP}$ from Theorem \ref{asymptotic} can be written as
$$
    \begin{aligned}
        \mathcal{C}_\alpha^{PP} &= \left\{ \theta : \left| \widetilde{g}(\theta) + \widehat{\Delta}(\theta) \right| \leq w_{\alpha}(\theta) \right\}\\
        &= \left\{ \theta : \left| \theta - \sum_{k=1}^{K} {p_k \frac{1}{N_k} \sum_{i=1}^{N_k} (\widetilde{X}_k^i)^+ f(\widetilde{X}_k^i)} + \sum_{k=1}^{K} {p_k \frac{1}{n_k} \sum_{i=1}^{n_k} (X_k^i)^+\left(f(X_k^i) - Y_k^i\right)} \right| \leq w_{\alpha}(\theta) \right\} \\
        &= \sum_{k=1}^{K} p_k \Big( \frac{1}{N_k} \sum_{i=1}^{N_k} (\widetilde{X}_k^i)^+ f(\widetilde{X}_k^i) - \frac{1}{n_k} \sum_{i=1}^{n_k} (X_k^i)^+\left(f(X_k^i) - Y_k^i\right) \Big)  \pm w_\alpha(\theta).
    \end{aligned}
$$
This is exactly the set constructed in Algorithm \ref{FL-linear-alg}, which completes the proof.
\end{proof}
}

\vfill

\end{document}